\documentclass[11pt,a4paper]{article}

\usepackage{amsmath,amsthm,amssymb}
\usepackage{polyu-vclab}

\usepackage{lipsum}                % placeholder text, remove for real paper
\usepackage[numbers,sort&compress]{natbib}

\usepackage{hyperref}       % hyperlinks
\usepackage{url}            % simple URL typesetting
\usepackage{booktabs}       % professional-quality tables
\usepackage{amsfonts}       % blackboard math symbols
\usepackage{nicefrac}       % compact symbols for 1/2, etc.
\usepackage{microtype}      % microtypography
\usepackage{xcolor}         % colors
\usepackage{colortbl}
\usepackage{threeparttable}
\usepackage{caption}
\usepackage{graphicx}
\usepackage[table]{xcolor}
\usepackage{multirow}
\usepackage{array}
\usepackage{wrapfig}
\usepackage{overpic}
\usepackage{subfig}
\usepackage{tcolorbox}
\usepackage{enumitem}
\newcommand{\mypara}[1]{\vspace{.05in}\noindent\textbf{#1}}

\newtheorem{proposition}{Proposition}
\newcommand{\Eqref}[1]{Eq.~\ref{#1}}
\newcommand{\figref}[1]{Fig.~\ref{#1}}
\newcommand{\tabref}[1]{Tab.~\ref{#1}}
\newcommand{\secref}[1]{Sec.~\ref{#1}}
\newcommand{\appnref}[1]{\textbf{Appendix~\ref{#1}}}

\newcommand{\tablestyle}[2]{\setlength{\tabcolsep}{#1}\renewcommand{\arraystretch}{#2}\centering}
\setEyebrow{PolyU VCLab\,\textbullet\,Preprint 2026}

\setReportTag{Visual Computing Lab\,\textperiodcentered\,The Hong Kong Polytechnic University}

\papertitle{DEL: Digit Entropy Loss for Numerical Learning of Large Language Models}

\paperauthors{%
  Zhaohui Zheng\affilmark{1}\quad
  Chenhang He\affilmark{1}\quad
  Shihao Wang\affilmark{1}\quad
  Yuxuan Li\affilmark{2}\quad
  Ming-Ming Cheng\affilmark{2}\quad
  Lei Zhang\correspondmark\affilmark{1}%
}

\paperaffil{%
  \affilmark{1}\,The Hong Kong Polytechnic University\quad
  \affilmark{2}\,VCIP, College of Computer Science, Nankai University
}

\papernotes{%
  \correspondmark\,Corresponding author (\href{mailto:cslzhang@comp.polyu.edu.hk}{cslzhang@comp.polyu.edu.hk}).%
}

\paperbadges{%
  \vclabbadgesolid{arXiv:2026.xxxxx}{https://arxiv.org/abs/2026.xxxxx}\;%
  \vclabbadge{Project Page}{https://polyu-vclab.github.io/}\;%
  \vclabbadge{Code}{https://github.com/PolyU-VCLab/DEL}\;%
}

\begin{document}
%==========================================================================

\maketitleVCLab

%-----------------------------------------------------------
% Abstract -- bordered box with red left bar (VCLab mission style)
%-----------------------------------------------------------
\begin{vclabAbstract}
\noindent\textbf{Abstract.}\;
  Number prediction stands as a fundamental capability of large language models (LLMs) in mathematical problem-solving and code generation.
  The widely adopted maximum likelihood estimation (MLE) for LLM training is not tailored to number prediction.
  Recently, penalty-driven approaches, e.g., Number Token Loss and Discretized Distance Loss, introduce an inductive bias of numerical distance but induce over-sharpened and over-flattened digit distributions, respectively.
  In this paper, we make an in-depth analysis on LLM numerical learning, and show that existing numerical learning methods conceptually follow a criterion-distance formulation, where the criterion term represents optimization pattern and the distance term instills geometric prior.
  Consequently, we present \textbf{D}igit \textbf{E}ntropy \textbf{L}oss (DEL) for auto-regressive numerical learning, which reformulates the conventional unsupervised entropy optimization in three key designs: leveraging digit conditional probability and binary cross-entropy to guide the entropy optimization into a supervised manner; deprecating the distance term to bypass the issue of numerical distance; and generalizing the integer-based numerical learning to floating-point number optimization, enabling more accurate number prediction.
  Our DEL formulation can incorporate integers, decimals, and decimal points, expanding the learning objective from a single digit to the floating-point number domain.
  Experiments conducted on seven mathematical reasoning benchmarks with four representative LLMs, including CodeLlama, Mistral, DeepSeek, and Qwen-2.5, demonstrate that DEL consistently outperforms its counterparts in both overall prediction accuracy and numerical distance.
\end{vclabAbstract}

\keywords{Large Language Models, Numerical Learning, Loss function, PolyU VCLab}

%==========================================================================
\section{Introduction}
%==========================================================================

Large language models (LLMs) have demonstrated remarkable natural language processing capabilities, gradually evolving from the handling of simple human-computer dialogues to complex advanced reasoning, such as scientific computation \cite{Math-shepherd,DART-Math,MathScale} and code generation \cite{CodeLlama,MAmmoTH,Openmathinstruct}.
As numbers are ubiquitous used in our daily life, the ability to understand, reason with and accurately generate numerical information has emerged as a foundational capability of LLM, serving as the bedrock for quantitative analysis, logical deduction, and the stability of executing algorithms within the aforementioned domains.

Since the pioneering work in \cite{brown1992class}, a common approach to optimize language models is maximum likelihood estimation (MLE), which maximizes the likelihood of conditional probabilities of the generated language sequence.
{However, it has been found that optimizing LLMs using MLE is sub-optimal as the zero-avoiding property of MLE forces the model to assign non-trivial probabilities to any observed tokens in the training data.
This will lead to issues such as text degeneration \cite{Holtzman2020The}, deviation from human-like text \cite{MixCE}, and data void overestimation \cite{TaiLr}.}
To alleviate such problems, later representative methods \cite{TaiLr,MixCE,EMO} have considered the use of new distance alternatives, which show effectiveness in the prediction of general texts. However, their improvements in number prediction remain limited {because these approaches neglect the characteristics of numerical optimization.}

Recent advances in numerical learning \cite{NTL,DIST2loss} offer a promising alternative by introducing a penalty-driven approach.
By adding a penalty term to the cross entropy loss to measure the proximity between digits, the supervision signals carried by digits can provide positive guidance to LLMs.
As shown in \figref{fig:optimization-goal}, NTL \cite{NTL} explicitly suppresses the probability of non-target tokens to nearly 0, while DIST$^2$Loss \cite{DIST2loss} enforces a smooth objective distribution.
As we will analyze later, both NTL and DIST$^2$Loss conceptually follow a criterion-distance formulation, where the criterion term provides the optimization pattern for the LLMs to learn and the distance term determines the degree of penalty imposed on each digit probability.
However, these two losses have some limitations.
\textit{Firstly, they heavily rely on prior knowledge, i.e., the definition of the distance term.}
For example, if the target digit is 3, predicting 7 is further away than predicting 4, and therefore a greater penalty will be imposed.
A rigorous constraint as in \cite{NTL} can easily lead to over-sharpened digit distribution, while a soft constraint as in \cite{DIST2loss} can produce greater uncertainty in generating digits.
In addition, the semantic information of numbers is multifaceted and cannot be fully captured by numerical distance alone.
\textit{Secondly, they lack holistic number awareness.}
Specifically, NTL \cite{NTL} operates within a single-digit optimization paradigm, which limits its ability to model the global numerical structure of multi-digit numbers.
DIST$^2$Loss \cite{DIST2loss} introduces the digital place weighting for integers, but the discontinuous penalty between the integers and the decimals still persists since the decimal is treated as a separate integer.

\begin{figure*}
  \vspace{0pt}
  \scriptsize
  \begin{center}
		\begin{overpic}[width=1\linewidth]{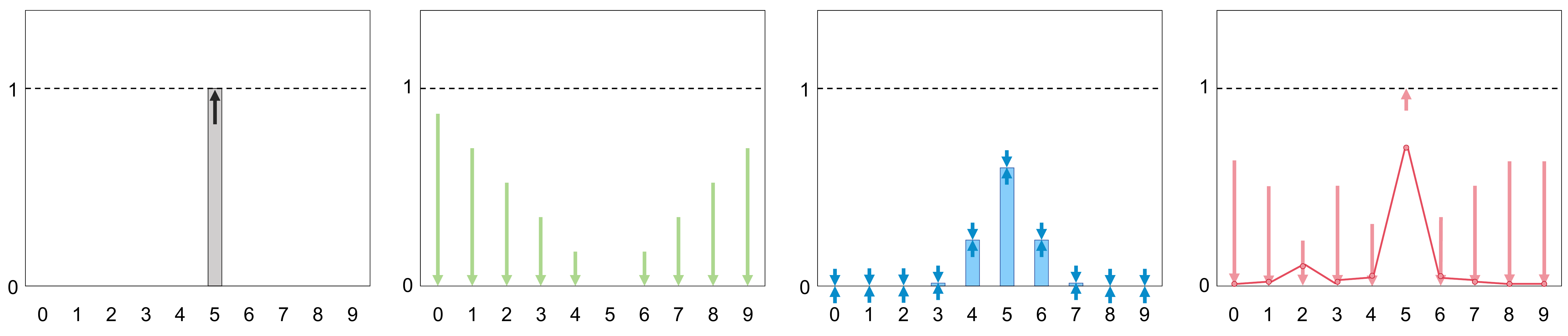}
        \put(11,17.8){MLE}
        \put(34.5,17.8){NTL \cite{NTL}}
        \put(57.2,17.8){DIST$^2$Loss \cite{DIST2loss}}
        \put(87,17.8){Ours}
    \end{overpic}
	\end{center}
	\captionsetup{font={small}}
	\vspace{-20pt}
	\caption{Comparison of the optimization objectives of MLE, NTL \cite{NTL}, DIST$^2$Loss \cite{DIST2loss}, and our proposed DEL. In the illustration, 5 is the target token.}
  \label{fig:optimization-goal}
  \vspace{-10pt}
\end{figure*}

In this paper, we propose \textbf{D}igit \textbf{E}ntropy \textbf{L}oss (DEL) for auto-regressive numerical learning.
Inspired by the phenomenon revealed in \cite{wangbeyond}, high entropy tokens are mostly logic connectors and transitions, while low entropy tokens exhibit determinacy. 
We argue that highly deterministic generation is fundamentally crucial for numerical tokens in language models.
This is because as a discrete and precise symbolic system, the semantics of numbers come from their magnitude, digit sequence, and proportional relationships.
Moreover, numerical determinacy serves as an indispensable prerequisite for reliable LLM deployment in downstream tasks.
Guided by these insights, we shift our attention towards entropy minimization.
However, conventional entropy minimization may struggle in multiple local optima because any one-hot vector is one of the solutions.
To address this issue, we reformulate the conventional entropy optimization with the following three key designs. First, we propose to use the combination of digit conditional probability and binary cross-entropy as the new criterion term, where we particularly concentrate on the subspace of digits.
Second, we deprecate the distance term because it does not strictly follow the semantic representations of digits learned by LLMs from massive real-world corpora.
Finally, we generalize the numerical learning to floating-point number optimization, which recognizes all floating-point numbers in the ground-truth token sequences during training, enabling the language models with holistic number awareness.

To evaluate our proposed method, we apply DEL to several popular LLMs, including CodeLlama, Mistral, DeepSeek-Math, and Qwen-2.5.
We demonstrate that DEL is capable of boosting number prediction performance in both Chain-of-Thought (CoT) reasoning and Program-of-Thought (PoT) code generation.
The experiments on seven mathematical reasoning benchmarks show that DEL achieves consistent accuracy gains and lower numerical distance over its state-of-the-art counterparts.
%

%==========================================================================
\section{Related Work}
%==========================================================================
\subsection{Probabilistic Language Models}
Probabilistic language models have long been used in Natural Language Processing (NLP), e.g., neural machine translation \cite{och2003minimum,sennrich2016neural}, speech recognition \cite{graves2014towards,radford2023robust}, and optical character recognition (OCR) \cite{graves2006connectionist,li2023trocr}, etc.
This language modeling approach can be dated back to pioneer work \cite{brown1992class} in 1992. 
By modeling a series of conditional probabilities over the sequences of symbols pertaining to a language, researchers have witnessed extraordinary reasoning capability emerging from the language models after training on large-scale humanity data.
Then, the emergence of LSTM \cite{hochreiter1997long} and transformers \cite{vaswani2017attention} greatly enhanced the ability of language models to model long sequences.
Subsequently, word embedding techniques have evolved from learning static representations for words \cite{bengio2003neural,mikolov2013efficient} to context-aware, dynamic representations like BERT \cite{BERT}.
Another interesting approach is prompting method.
With some task-specific exemplars, language models \cite{GPT-3} can better perform a variety of tasks and even generate the chain-of-thought (CoT) \cite{CoT, CCoT, ToT}, which shows better performance in math word problems.
Recently, Program-of-Thought (PoT) \cite{PoT} and Program-aided Language Models (PAL) \cite{PAL} have explored code generation to overcome the numerical calculation errors and the inefficiency of multiple iterations.

\subsection{Probabilistic Sequence Optimization}
When generating a sequence, the language model performs next token prediction in an autoregressive manner.
After gathering the inference results, the maximum likelihood estimation is used to optimize the sequence probabilities.
The idea of improving such sequence probability is not new.
For example, MixCE \cite{MixCE} combines the reversed cross-entropy to help the language model to generate more like humans.
TaiLr \cite{TaiLr} utilizes the total variation distance to reduce the penalization intensity of low probability predictions.
Another attempt is EMO \cite{EMO}, which considers the Earth Mover Distance (EMD) to drive the differentiable upper bound of EMD.
Recently, optimization for numerical sequences has received increasing attention, as predicting numerical sequences is the core of many reasoning tasks, e.g., arithmetic \cite{yuan2023well}, mathematical reasoning \cite{GSM8K, MATH}, code generation \cite{CodeLlama, MAmmoTH}, etc.
It is also a fundamental problem in some vision language models (VLM) \cite{llava, InternVL, Qwen2-VL}, where the language model begins to take on some object-related visual grounding and clue inferring.
To optimize the numerical sequence, a successful approach, namely Number Token Loss (NTL) \cite{NTL}, has been found useful to mathematical reasoning.
NTL incorporates digit distance into the objective function, thereby endowing the language models with the awareness of digit prediction errors.
More recently, DIST$^2$Loss \cite{DIST2loss} proposes to mimic a digit-distance-driven soft target, where a KL-divergence is imposed between the predict sequence probabilities and the soft targets.
Despite noticeable improvement over the maximum likelihood estimation, they are essentially based on a heuristic definition of numerical distance.
Motivated by the aforementioned methods, we study the numerical learning in language models with in-depth analyses on the existing approaches, exploring effective means to adapt the numerical nature.

\section{Numerical Learning in Language Models}\label{sec:3}

\mypara{Background}.
In reasoning tasks that involve numeric outputs, the language model $\mathbf{M}$ with parameters $\Theta$ typically predicts numbers in an auto-regressive, token-by-token manner \cite{raffel2020exploring, GPT-2,Llama1}.
Let $X$ be a language sequence of length $T$. 
When predicting the $t$-th token, the language model outputs logits $z_t=\mathbf{M}_{\Theta}(\cdot|x_{<t})$, and converts it to the conditional probability distribution $\mathbf{p}(\cdot|x_{<t}) =\mathcal{S}(z_t)$ over the vocabulary $V$, where $\mathcal{S}(\cdot)$ is the Softmax function.
We call the probability $\mathbf{p}(\cdot|x_{<t})$ \textbf{the full space probability} since it represents the probability of taking $x_t$ as the predicted token over the full vocabulary token space $V$.
During training, all the probabilities are gathered to a large probability sequence $\mathbf{p}$.
Here, we only discuss the digit tokens and omit the non-digital tokens for brevity.
Then, the standard MLE is used to minimize the distance between the probability sequence $\mathbf{p}$ and the ground-truth digit token sequence $\mathbf{g}$.
This solution, also known as the language modeling loss, has been dominantly adopted in various popular language models, including GPT \cite{GPT-2,GPT-3,GPT-4}, Llama \cite{Llama1,Llama2,CodeLlama}, Qwen \cite{qwen,qwen2, Qwen2.5}, Mistral \cite{Mistral}, Vicuna \cite{vicuna}, etc.

To facilitate numerical learning, the NTL method \cite{NTL} adds a penalty term focusing on the digits during training, which can be written as:
\begin{equation}\label{eq:NTL}
    \mathcal{L}_{NTL} = \underbrace{\mathcal{H}(\mathbf{p},\mathbf{g})}_{\text{cross-entropy}} +  \alpha \sum\limits_{\mathbf{g}_t\in V_d} \hat{\mathbf{p}}(\cdot|x_{<t})||V_d-\mathbf{g}_t||_1,
\end{equation}
where $V_d=\{0,1,\cdots,9\}$ is the digit vocabulary, $\hat{\mathbf{p}}(\cdot|x_{<t})=\mathcal{S}(\hat{z}_t)=\mathcal{S}(\mathbf{M}_{\Theta}(\cdot|x_{<t})|_{V_d})$ denotes  \textbf{the digit conditional probability} over $V_d$, $\mathbf{g}_t$ denotes the $t$-th ground-truth token, and $\alpha$ is a hyperparameter balancing the two terms.
Due to the introduction of penalty term, the predicted digit probabilities will shift towards the target digit distribution.
Note that we use the NTL with the Wasserstein-1 Distance (NTL-WAS) in the above equation since it generally achieves better accuracy as evidenced by \cite{NTL} both theoretically and experimentally.

Another recent study DIST$^2$Loss \cite{DIST2loss} suggests distance modeling in loss functions.
It first formulates a target distribution $\mathbf{q}_t$:
\begin{equation}
    \mathbf{q}_t = \mathcal{S}(-\rho(V_d, \mathbf{g}_t), \tau),\quad \mathbf{g}_t\in V_d,
\end{equation}
where $\mathcal{S}(\cdot, \tau)$ denotes the generalized softmax function with temperature $\tau$, $\rho$ is a pre-defined distance metric.
Then, a KL-divergence loss is imposed between the predicted digit probability $\hat{\mathbf{p}}(\cdot|x_{<t})$ and the $\mathbf{q}_t$ defined above, which can be written as:
\begin{equation}\label{eq:dist2loss}
    \mathcal{L}_{dist} =  \mathcal{H}(\mathbf{p},\mathbf{g}) + \alpha \sum\limits_{\mathbf{g}_t\in V_d}\mathcal{L}_{\text{KL}}(\mathbf{q}_t\parallel\hat{\mathbf{p}}(\cdot|x_{<t})).
\end{equation}

\mypara{A Closer Look at Numerical Learning}.
It can be deduced that the above two losses are conceptually equivalent to the following form of loss function:
\begin{equation}\label{eq:criterion-distance}
    \mathcal{L}_{N} =  \mathcal{H}(\mathbf{p},\mathbf{g}) + \alpha \sum\limits_{\mathbf{g}_t\in V_d} \underbrace{C(\hat{\mathbf{p}}(\cdot|x_{<t}))}_{\text{criterion term}}\cdot \underbrace{D(V_d, \mathbf{g}_t)}_{\text{distance term}},
\end{equation}
where $C(\hat{\mathbf{p}}(\cdot|x_{<t}))$ optimizes the criterion between the predicted digit probability $\hat{\mathbf{p}}(\cdot|x_{<t})$ and the ground-truth digit $\mathbf{g}_t$; $D(V_d, \mathbf{g}_t)$ measures the distance between the digit vocabulary $V_d$ and $\mathbf{g}_t$.
Specifically, NTL (Eq. \ref{eq:NTL}) attempts to minimize the probabilities for the non-target tokens by defining $C(\hat{\mathbf{p}}(\cdot|x_{<t}))=\hat{\mathbf{p}}(\cdot|x_{<t})$.
Then, the distance term $D(V_d, \mathbf{g}_t)$ guides the model to capture how far each numeric token is from its ground-truth value.
In DIST$^2$Loss, expanding the penalty term $\mathcal{L}_{\text{KL}}(\mathbf{q}_t\parallel\hat{\mathbf{p}}(\cdot|x_{<t}))$ and ignoring the zero gradient term, one can get:
\begin{equation}
\vspace{2pt}
    \mathcal{L}_{\text{KL}}(\mathbf{q}_t\parallel\hat{\mathbf{p}}(\cdot|x_{<t})) = 
    \mathcal{H}(\mathbf{q}_t, \hat{\mathbf{p}}(\cdot|x_{<t}))= -\mathbf{q}_t\cdot\log\hat{\mathbf{p}}(\cdot|x_{<t}),\\
\end{equation}
which implies that $C(\hat{\mathbf{p}}(\cdot|x_{<t}))=-\log\hat{\mathbf{p}}(\cdot|x_{<t})$ and $D(V_d, \mathbf{g}_t)=\mathbf{q}_t$.

It can be seen that both losses penalize the prediction errors for the non-target digit tokens.
Comparing the two losses, we see that the key difference lies in the distance term.
In NTL, the distance term penalizes the criterion term by the $l_1$ distance between the digit vocabulary and the ground-truth digit, while the distance term in DIST$^2$Loss weights the distance with a soft distance-based distribution.
More analysis on numerical learning can be found in \appnref{appendix:analysis}.

\mypara{Conclusion}.
In spite of the differences between NTL and DIST$^2$Loss, it can be concluded that they share the same optimization paradigm, which we call \textit{criterion-distance numerical learning}, as laid down in Eq. \ref{eq:criterion-distance}.
We use the term "numerical learning" here instead of "numerical reasoning" because it is task-specific, and the positive effects of optimization is hard to generalize to capabilities outside the task domain.
For example, when the model is about to generate a numerical answer, e.g., predicting "7", numerical learning can help the model to predict it, and "6" and "8" can be considered very close.
But it cannot help the language model for mathematical reasoning (e.g., the addition and the multiplication between two arbitrary numbers) and the understanding of structured expression and unseen numbers $\pi$, $\infty$, etc.
Essentially, numerical reasoning requires language models to possess a range of capabilities, such as logical reasoning, symbolic comprehension, unit conversion, problem decomposition, contextual semantic association, among which numerical generation accuracy is one of them. This is what numerical learning aims to explore in this paper.

\section{Digit Entropy Loss}\label{sec:DEL}

In the last section, we have shown that state-of-the-art numerical learning methods NTL and DIST$^2$Loss follow the criterion-distance optimization paradigm.
Their core design relies on manually defined numerical distance terms to inject an inductive bias of numerical proximity into language models.
This on one hand leads to over-sharpened or over-flattened digit distributions.
On the other hand, the manually defined numerical distance may not fully align with the semantic information of digits learned by LLMs.
To address these issues, we introduce \textbf{D}igit \textbf{E}ntropy \textbf{L}oss (DEL) for better numerical learning.

\mypara{From Criterion-Distance to Digit Entropy Optimization}. 
Eq. \ref{eq:criterion-distance} shows that the criterion term is the key to the entire formulation since it provides the gradient flow for the language model to learn.
We draw inspiration from \cite{wangbeyond}, which reveals that the high entropy tokens acts as pivotal decision points in CoTs, whereas the low entropy tokens indicate determinacy.
We argue that highly deterministic number generation is crucial for language models {as it is an indispensable prerequisite for reliable LLM deployment in downstream tasks}.
An intuitive way to minimize the digit entropy is to take the digit conditional entropy as the new criterion term for numerical learning, which can be written as:
\begin{equation}\label{eq:entropy}
    \mathcal{L}_{DE}= \mathcal{H}(\mathbf{p},\mathbf{g}) + \alpha\sum\limits_{\mathbf{g}_t\in V_d}\mathcal{H}(\hat{\mathbf{p}}(\cdot|x_{<t})) =  \mathcal{H}(\mathbf{p},\mathbf{g}) -\alpha\sum\limits_{\mathbf{g}_t\in V_d} \hat{\mathbf{p}}(\cdot|x_{<t}) \log \hat{\mathbf{p}}(\cdot|x_{<t}),
\end{equation}
where $\mathcal{H}(\mathbf{p},\mathbf{g})$ is the standard cross-entropy loss, and $\mathcal{H}(\hat{\mathbf{p}}(\cdot|x_{<t}))$ is the Shannon entropy \cite{shannon1948mathematical} of the conditional probability distribution of digit tokens.
However, minimizing the digit conditional entropy is unsupervised, which may yield many trivial solutions.
Consequently, this objective encourages the model to produce sharper probability distributions, but it cannot ensure that the probability mass is optimized to the target digits.

\mypara{Digit Entropy Loss}.
To avoid the confusion of infinite solutions in digit entropy optimization, we propose to jointly optimize the product of the digit conditional probability and the binary cross-entropy, which is given by:
\begin{equation}\label{eq:del}
\begin{aligned}
    \mathcal{L}_{DEL} & = \mathcal{H}(\mathbf{p},\mathbf{g}) + \alpha\sum\limits_{\mathbf{g}_t\in V_d} \hat{\mathbf{p}}(\cdot|x_{<t})\mathbf{BCE}(\dot{\mathbf{p}}(\cdot|x_{<t}), y_t)\\
    &=\mathcal{H}(\mathbf{p},\mathbf{g}) -\alpha\sum\limits_{\mathbf{g}_t\in V_d} \hat{\mathbf{p}}(\cdot|x_{<t}) \Big(y_t\log\dot{\mathbf{p}}(\cdot|x_{<t}) + (1-y_t)\log\big(1-\dot{\mathbf{p}}(\cdot|x_{<t})\big)\Big),
\end{aligned}
\end{equation}
where $y_t\in\mathbb{R}^{10}$ is the one-hot label vector of $\mathbf{g}_t$, and $\dot{\mathbf{p}}(\cdot|x_{<t})=\sigma(\mathbf{M}_{\Theta}(\cdot|x_{<t})|_{V_d})$ is the digit conditional probability for binary classification, calculated by the sigmoid function $\sigma$.
In the formulation, the penalty term of DEL has a global minima at $\hat{\mathbf{p}}(\mathbf{g}_t|x_{<t})=1$, and $0$ for any $x_t\neq \mathbf{g}_t$.
Compared to NTL, DEL sharpens the learned digit distribution in a moderate manner.
Specifically, the product $\hat{\mathbf{p}}(\cdot|x_{<t}) \cdot \mathbf{BCE}(\cdot)$ imposes a soft constraint: the BCE term’s gradient weakens as non-target probabilities decrease, i.e., $\log(1-\dot{\mathbf{p}}(v|x_{<t})) \to 0$, while the penalty term of NTL is a constant based on numerical distance, which sustainably pushes the non-target probabilities to $0$.

\mypara{Deprecation of the Distance Term}. The distance term in previous works \cite{NTL,DIST2loss} acts for two purposes: weighting the gradients and mimicking the proximity between digits.
It imposes a geometric prior, which specifies that the language model must conform to numerical distance.
However, digits are not merely numerical values on a number line.
For many natural language scenarios, digits serves as symbols, e.g., dates, addresses, serial identities, and convey degree, etc.
It has been found that digits learned by LLMs exhibit an entangled representation, blended by both string-like and numerical representations \cite{marjieh2026what}.
Shao et al. \cite{shao2025benfords} observed Benford’s Curse in the first place of numbers caused by digit bias in web-collected corpora.
To investigate the semantic structure of digits, we analyze the digital embeddings learned by LLMs.

For each digit $x_i\in V_d$, we feed it to an LLM and obtain an embedding $e_i$ from the last hidden state of transformer.
\figref{fig:digital-embeddings} \textbf{Left} shows the t-SNE visualization \cite{t-SNE} of digit token embeddings on 2D projection surface.
It can be seen that the digit distribution is scattered  with weak numerical order.
For instance, the digit 0 is not the closest to 1, but close to 8.
\figref{fig:digital-embeddings} \textbf{Middle} and \textbf{Right} show the differences between the digit token embedding similarity and the ideal numerical distance (in $l_1$-norm).
It is worth noting that these phenomena are hold for LLMs, though the distribution of digits might differ.
This indicates that numerical distance may not be fully aligned with the semantic information learned by LLMs.
In other words, for example, if an LLM treats the digits 2 and 3 as strings, the summation of the two digits may be 23 rather than 5.
This suggests that the manually designed numerical distance may be sub-optimal and provide contradictive optimization for the model.
Thus, we propose to deprecate the distance term.
Our empirical results later provide the primary support for our design.

\tabref{tab:comparison-loss} summarizes the optimization objectives of NTL, DIST$^2$Loss and our proposed DEL. We can see that both NTL and DIST$^2$Loss rely on the distance term to shape the digit distribution, while our DEL derives from entropy optimization, which is free from the manually defined distance term.
The introduced binary digit probabilities also offer a new optimization path for numerical learning.

\begin{table*}[!tb]
    \centering
    \renewcommand\arraystretch{1.1} 
    \setlength{\tabcolsep}{6pt}
    \setlength{\abovecaptionskip}{4pt}
    \caption{The optimization objectives of NTL, DIST$^2$Loss and our DEL.}
    \vspace{2pt}
    \small
\begin{tabular}{l|cc|c}

\hline
Method & criterion & distance & Optimal solutions \\
\hline
NTL & digital probability $\hat{\mathbf{p}}(\cdot|x_{<t})$ & $||V_d-\mathbf{g}_t||_1$ & $\hat{\mathbf{p}}(v|x_{<t})\rightarrow0$\\
DIST$^2$Loss & digital log-probability $\log\hat{\mathbf{p}}(\cdot|x_{<t})$ & soft target $\mathbf{q}_t$ & $\hat{\mathbf{p}}(\cdot|x_{<t}))\rightarrow \mathbf{q}_t$\\
DEL (Ours) & $\hat{\mathbf{p}}(\cdot|x_{<t})\mathbf{BCE}(\dot{\mathbf{p}}(\cdot|x_{<t}), y_t)$ & - & $\hat{\mathbf{p}}(\cdot|x_{<t})\rightarrow0$ or $\dot{\mathbf{p}}(\cdot|x_{<t})\rightarrow y_t$\\
\hline
\end{tabular}
    \label{tab:comparison-loss}
    \vspace{-4pt}
\end{table*}

\begin{figure*}[!t]
	\centering
    \scriptsize
		\begin{overpic}[width=1\linewidth]{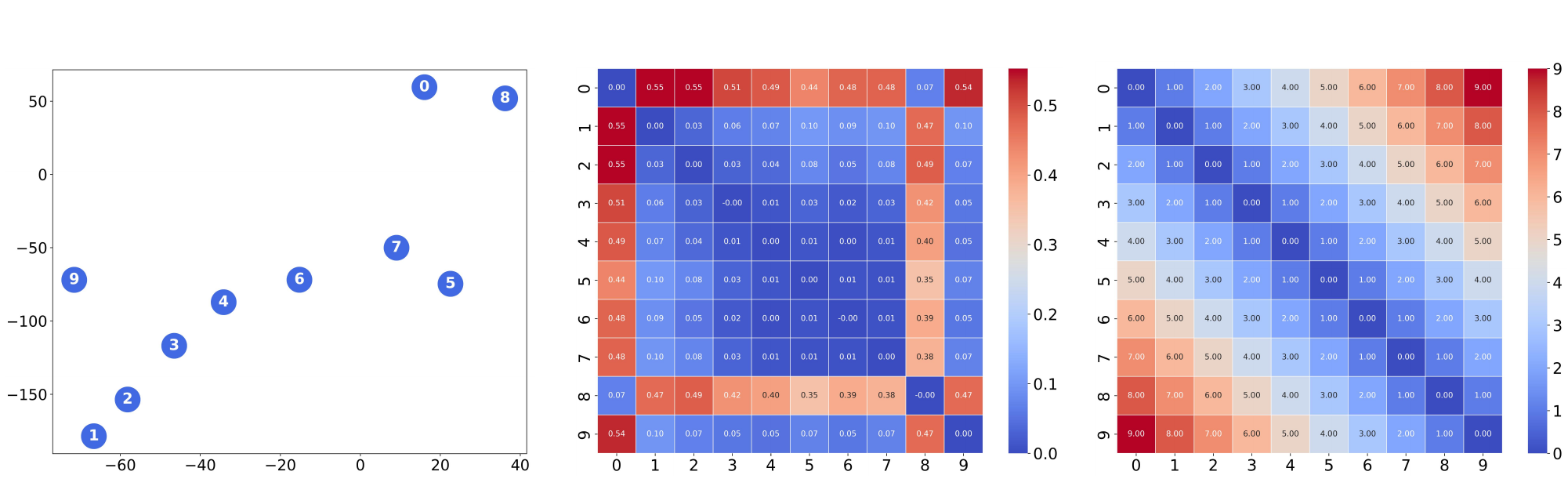}
        \put(9,27.8){tSNE of Digital Embeddings}
        \put(37,27.8){Cosine Distance of Digital Embeddings}
        \put(75,27.8){Ideal Numerical Distance}
  \end{overpic}
    \vspace{-18pt}
	\caption{Visualization of digit token embeddings and their similarity structure on Qwen-2.5-7B \cite{Qwen2.5}.
    \textbf{Left}: t-SNE 2D projection \cite{t-SNE} of digit token embeddings.
    \textbf{Middle}: Cosine distance heatmap of digit embeddings.
    \textbf{Right}: Ideal numerical distance heatmap \cite{NTL} for comparison.}
  \label{fig:digital-embeddings}
  \vspace{-6pt}
\end{figure*}

\mypara{Floating-point Number Optimization}.
In real-world scenarios, numbers are often dominated by multi-digits.
The most common examples are multi-digit integers and floating-point numbers.
Given a numerical token series:
\begin{equation}\label{eq:float-point}
    \mathbf{g}=\mathbf{g}_0^L\mathbf{g}_1^{L-1}\cdots \mathbf{g}_L^0\textbf{.}\mathbf{g}_{L+2}^{-1}\mathbf{g}_{L+3}^{-2}\cdots \mathbf{g}_{L+M+1}^{-M},
\end{equation}
which represents a multi-digit floating point number $\mathbf{g}$, with the highest integer place $\mathbf{g}_0^L$, the unit place $\mathbf{g}_L^0$, the tenths place $\mathbf{g}_{L+2}^{-1}$, and the lowest decimal place $\mathbf{g}_{L+M+2}^{-M}$.
It has been discussed in \cite{DIST2loss} that extending DIST$^2$Loss to multi-digit integers may face the problem of credit assignment.
This is because the global distance metric is the result of the collective action of all tokens and cannot be simply broken down into the independent distance of each token.
To alleviate this issue, DIST$^2$Loss integrates the place value weighting to allocate more penalty to higher place digits.
However, this strategy fails when dealing with floating-point numbers, because the decimal will be treated as a new integer, resulting in discontinuous penalty.
We therefore propose floating-point number optimization, which delivers larger penalty for integers, and a small penalty for decimals.
We suggest a linear penalty for integer places and an exponential decay for decimals: 
\begin{equation}\label{eq:digital-place-weight}
    u(t) = \left\{
    \begin{aligned}
        &ki+1, & \text{for } \mathbf{g}_t^i,\,i\geq0,\\
        &\beta^i, & \text{for } \mathbf{g}_t^i,\,i<0,
    \end{aligned}\right.
\end{equation}
where parameters $k,\beta>0$ control the penalty amplitude for each token.
The visual comparison of $u(t)$ in DIST$^2$Loss and DEL can be seen in \figref{fig:comparison-digital-place} of \appnref{appendix:ablation}.
Finally, we generalize DEL from single-digit optimization to floating-point number optimization by rewriting \Eqref{eq:del} as follows:
\begin{equation}\label{eq:del-w}
    \mathcal{L}_{DEL}= \mathcal{H}(\mathbf{p},\mathbf{g}) + \alpha \sum\limits_{\mathbf{g}_t\in V_d}u(t)\hat{\mathbf{p}}(\cdot|x_{<t})\mathbf{BCE}(\dot{\mathbf{p}}(\cdot|x_{<t}), y_t).
\end{equation}

\section{Experiments}

In this section, we conduct extensive experiments to demonstrate the performance of the proposed DEL.
We apply DEL to several recently popular LLMs, and evaluate it together with state-of-the-art losses on the classic mathematical reasoning and code generation tasks.

\subsection{Experimental Setup}\label{sec:setup}

\mypara{Baseline and Compared Methods}.
We conduct experiments using four widely used open-source LLMs, including CodeLlama \cite{CodeLlama}, Mistral \cite{Mistral}, Qwen2.5 \cite{Qwen2.5}, and DeepSeek-Math-instruct \cite{DeepMath}.
MLE is used as the baseline loss. To make the comparison clean and fair, all experimental settings irrelevant to the loss function remain unchanged for a given LLM.
We compare DEL with the following representative methods: MixCE \cite{MixCE} and EMO \cite{EMO}, which aim at improving language understanding on general texts, and the recently developed numerical learning methods NTL \cite{NTL} and DIST$^2$Loss \cite{DIST2loss}.
In addition, we also evaluate 12 off-the-shelf open-source LLMs, which have made remarkable progress in mathematical reasoning.

\begin{table*}[!tb]
    \renewcommand\arraystretch{1.2}
    \centering
    \scriptsize
    \setlength{\tabcolsep}{1.2pt}
    \caption{Performance comparison of mainstream LLMs trained using their own losses, the general approaches MixCE, EMO, and numerical learning methods NTL, DIST$2^2$Loss and our DEL.
    Best results are highlighted in \textbf{bold}.
    }
    \vspace{-4pt}
    \begin{threeparttable}
        \begin{tabular}{lclcccccccccccccc@{}}
            \toprule
            \multirow{2}{*}{\textbf{Method}} && \multirow{2}{*}{\textbf{Base}} && \multirow{2}{*}{\textbf{mACC}} && \multicolumn{8}{c}{\textbf{Benchmark}} && \multirow{2}{*}{\textbf{Reference}}\\
            \cmidrule(lr){7-14}
            & & & & & & \textbf{GSM8K} & \textbf{MATH} & \textbf{SVAMP} & \textbf{SimulEq} && \textbf{AQuA} & \textbf{SAT} & \textbf{MMLU} &&\\
            \midrule
            
            % Existing methods
            Galactica-6.7B \cite{Galactica} && GAL && 16.2 && 10.2 & 2.2 & 25.6 & 4.2 && 25.6 & 17.5 & 28.0 && Arxiv22\\
            Vicuna-1.5-7B \cite{vicuna} && Llama-2 && 20.2 && 13.1 & 4.9 & 36.6 & 4.9 && 27.2 & 25.9 & 30.4 && NeurIPS23 \\
            Llama-2-7B \cite{Llama2} && - && 20.5 && 14.6 & 2.5 & 34.5 & 5.0 && 30.3 & 26.8 & 29.8 && Arxiv23 \\
            CodeLlama-7B \cite{CodeLlama} && Llama-2 && 24.3 && 25.2 & 13.0 & 49.4 & 3.5 && 24.0 & 28.6 & 26.9 && Arxiv23 \\
            MetaMath-7B \cite{MetaMath} && Mistral && 42.2 && 72.5 & 29.4 & 77.5 & 34.2 && 24.8 & 33.2 & 23.7 && ICLR24 \\
            DeepMath-Zero-Math-7B \cite{DeepMath} && Qwen2.5-Math && 44.6 && 43.7 & 50.8 & 38.7 & 27.2 && 44.9 & 53.6 & 53.5 && Arxiv25\\
            Math-Shepherd-RL-7B \cite{Math-shepherd} && Mistral && 45.4 && 82.6 & 22.6 & 82.4 & 45.3 && 24.4 & 33.2 & 27.1 && ACL24 \\
            Xwin-Math-V1.1-7B \cite{Xwin-Math} && Llama 2 && 45.5 && 74.7 & 32.3 & 81.0 & 50.6 && 26.4 & 27.4 & 26.0 && Arxiv24\\
            WizardMath-V1.1-7B \cite{Wizardmath} && Mistral && 48.1 && 74.5 & 29.3 & 75.4 & 70.7 && 25.3 & 33.7 & 28.0 && Arxiv23\\
            Skywork-OR1-Math-7B \cite{skyworK} && Qwen2.5-Math && 55.1 && 58.5 & 80.3 & 49.6 & 84.3 && 50.0 & 35.0 & 55.0 && Arxiv25\\
            MathScale-7B \cite{MathScale} && Mistral && 61.0 && 75.6 & 37.8 & 80.6 & 75.7 && 44.5 & 62.3 & 50.6 && ICML24 \\
            DART-Math-7B (Prop2Diff) \cite{DART-Math} && DeepSeek-math && 63.1 && 87.4 & 52.6 & 83.8 & 73.7 && 38.6 & 56.8 & 48.5 && NeurIPS24 \\ 
            \midrule
            && && && \multicolumn{4}{c}{\textbf{PoT code generation}} & & \multicolumn{3}{c}{\textbf{CoT reasoning}} && \\
            \cmidrule(lr){7-10}\cmidrule(lr){11-14}
            % Baseline methods
            MLE && CodeLlama-7B && 45.2 && 59.4 & 34.0 & 72.0 & 40.7 && 37.8 & 34.1 & 38.6 && - \\
            MixCE \cite{MixCE} && CodeLlama-7B && 46.1 && 59.9 & 34.4 & \textbf{72.6} & 43.0 && 38.2 & 34.5 & \textbf{40.3} && ACL23 \\
            EMO \cite{EMO} && CodeLlama-7B && - && - & - & - & - && - & - & - && ICLR24 \\
            NTL-WAS \cite{NTL} && CodeLlama-7B && 47.4 && 59.7 & 35.0 & 72.3 & 44.4 && 39.0 & 42.7 & 38.5 && ICML25 \\
            DIST$^2$Loss \cite{DIST2loss} && CodeLlama-7B && 47.6 && 59.4 & 34.0 & 71.2 & \textbf{44.9} && 44.5 & 39.6 & 39.9 && ICLR26 \\
            \rowcolor{green!8}
            DEL (Ours) && CodeLlama-7B && \textbf{49.0} && \textbf{59.9} & \textbf{35.7} & 72.4 & 44.4 && \textbf{46.9} & \textbf{43.6} & 40.1 && - \\
            \addlinespace[0.5em]
            MLE && Qwen2.5-1.5B && 52.8 && 54.0 & 42.3 & 71.6 & 44.2 && 45.3 & 56.4 & 55.5 && - \\
            MixCE \cite{MixCE} && Qwen2.5-1.5B && 52.9 &&  52.8 & 42.7 & 72.7 & 43.2 && 47.2 & 54.5 & \textbf{57.3} && ACL23 \\
            EMO \cite{EMO} && Qwen2.5-1.5B && 53.4 && 54.5 & 42.6 & 73.5 & 47.7 && 48.0 & 52.7 & 55.1 && ICLR24 \\
            NTL-WAS \cite{NTL} && Qwen2.5-1.5B && 53.8 && 55.1 & \textbf{43.5} & 73.3 & 46.1 && 43.1 & 59.1 & 56.3 && ICML25 \\
            DIST$^2$Loss \cite{DIST2loss} && Qwen2.5-1.5B && 53.1 && 54.2 & 41.9 & 73.2 & 44.6 && 45.7 & 55.9 & 56.5 && ICLR26 \\
            \rowcolor{green!8}
            DEL (Ours) && Qwen2.5-1.5B && \textbf{55.4} && \textbf{57.0} & 42.6 & \textbf{74.0} & \textbf{48.1} && \textbf{48.8} & \textbf{60.9} & 56.1 && - \\
            \addlinespace[0.5em]
            MLE && Mistral-7B && 54.2 && 75.1 & 39.3 & 82.4 & 44.2 && 48.8 & 45.0 & 44.9 && - \\
            MixCE \cite{MixCE} && Mistral-7B && 55.1 && 75.6 & 39.4 & 81.9 & 45.9 && 49.4 & 47.1 & 46.6 && ACL23 \\
            EMO \cite{EMO} && Mistral-7B && - && - & - & - & - && - & - & - && ICLR24 \\
            NTL-WAS \cite{NTL} && Mistral-7B && 55.6 && 75.1 & 39.9 & \textbf{83.5} & 44.9 && 49.2 & 49.5 & \textbf{46.8} && ICML25 \\
            DIST$^2$Loss \cite{DIST2loss} && Mistral-7B && 54.6 && 74.8 & 40.0 & 81.4 & 43.6 && 50.8 & 47.7 & 43.7 && ICLR26 \\
            \rowcolor{green!8}
            DEL (Ours) && Mistral-7B && \textbf{56.5} && \textbf{76.0} & \textbf{40.3} & 81.7 & \textbf{46.9} && \textbf{53.1} & \textbf{51.8} & 45.5 && - \\
            \addlinespace[0.5em]
            MLE && DeepSeek-math-7B-Ins. && 64.4 && 76.8 & 51.4 & 80.9 & 52.7 && 59.1 & 69.1 & 60.9 && - \\
            MixCE \cite{MixCE} && DeepSeek-math-7B-Ins. && 64.6 && 77.2 & 51.2 & 81.2 & \textbf{54.9} && 55.9 & 70.1 & 61.9 && ACL23 \\
            EMO \cite{EMO} && DeepSeek-math-7B-Ins. && 64.8 && 77.6 & 50.9 & 80.7 & 54.7 && 57.9 & 69.5 & 62.1 && ICLR24 \\
            NTL-WAS \cite{NTL} && DeepSeek-math-7B-Ins. && 64.9 && 77.0 & 51.7 & 81.6 & 50.2 && 59.4 & 73.6 & 60.8 && ICML25 \\
            DIST$^2$Loss \cite{DIST2loss} && DeepSeek-math-7B-Ins. && 64.8 && 76.6 & 51.2 & 82.5 & 53.7 && 58.3 & 69.5 & 61.7 && ICLR26 \\
            \rowcolor{green!8}
            DEL (Ours) && DeepSeek-math-7B-Ins. && \textbf{66.1} && \textbf{77.6} & \textbf{51.9} & \textbf{83.0} & 52.5 && \textbf{61.0} & \textbf{74.5} & \textbf{62.1} && - \\
            \addlinespace[0.5em]
            MLE && Qwen2.5-7B && 68.3 && 81.4 & 51.6 & 87.0 & 51.0 && 63.8 & 76.8 & 66.3 && - \\
            MixCE \cite{MixCE} && Qwen2.5-7B && 68.3 && 81.6 & 55.3 & 86.9 & 53.5 && 61.4 & 72.7 & 66.5 && ACL23 \\
            EMO \cite{EMO} && Qwen2.5-7B && 68.5 && 81.4 & 55.8 & 87.6 & 50.2 && 63.0 & 75.0 & 66.2 && ICLR24 \\
            NTL-WAS \cite{NTL} && Qwen2.5-7B && 68.8 && 81.5 & 54.6 & 87.1 & 53.5 && 59.8 & \textbf{79.1} & 65.9 && ICML25 \\
            DIST$^2$Loss \cite{DIST2loss} && Qwen2.5-7B && 69.2 && 81.7 & 56.0 & 87.0 & 51.8 && 63.8 & 76.8 & 67.6 && ICLR26 \\
            \rowcolor{green!8}
            DEL (Ours) && Qwen2.5-7B && \textbf{70.6} && \textbf{81.7} & \textbf{56.0} & \textbf{87.6} & \textbf{57.0} && \textbf{65.4} & 78.6 & \textbf{67.9} && - \\
            \bottomrule
        \end{tabular}
    \end{threeparttable}
    \vspace{-14pt}
    \label{tab:seven-benchmarks}
\end{table*}

\mypara{Datasets}. We evaluate DEL on seven widely used mathematical reasoning benchmarks:
1) high quality grade school math problems dataset GSM8K \cite{GSM8K};
2) multi-level difficulty math competition dataset MATH \cite{MATH};
3) arithmetic math word problem dataset SVAMP \cite{SVAMP};
4) simultaneous equations dataset SimulEq \cite{SimulEq};
5) algebraic word problems with rationales AQuA \cite{AQuA};
6) general college entrance exams SAT-Math \cite{SAT-Math};
and 7) college math problems dataset MMLU \cite{MMLU}.
We report zero-shot in-context-learning task accuracy and the mean accuracy (mACC) over the evaluated benchmarks.
To evaluate the applicability, we adopt CoT reasoning for AQuA, SAT-Math, and MMLU, while we use program-of-thought (PoT) for code generation on GSM8K, MATH, SVAMP, and SimulEq.
If the code generation fails, we follow \cite{MAmmoTH} to use CoT reasoning for assistant.
Training is conducted on the large-scale high-quality math instruction tuning dataset MathInstruct \cite{MAmmoTH}, which consists of 13 math datasets covering a broad variety of math fields and complexity levels such as Algebra, Probability, NumTheory, Calculus, and Geometry.

\mypara{Implementation Details}. We adopt the Huggingface transformer library \cite{wolf2020transformers} to conduct the experiments.
We train all the models for 2 epochs.
The AdamW \cite{AdamW} is adopted as the optimizer with bfloat16 precision enabled.
We use a learning rate of 2e-5 for CodeLlama, and 5e-6 for Mistral, Qwen2.5, and DeepSeek-Math-Instruct.
The learning rate decay adopts a cosine scheduler with a warm-up strategy for the first 3\% training steps.
The hyperparameter $\alpha$ in \Eqref{eq:del-w} is tuned separately for LLMs due to their different vocabulary size, which is set to $0.1$ for CodeLlama and Mistral, and $0.01$ for Qwen2.5 and DeepSeek-Math-Instruct.
We set $k=2$ and $\beta=1.02$ in the floating-point number optimization by default.
The ablation for these parameters can be found in \appnref{appendix:ablation}.
All the trainings are conducted on 8 NVIDIA A100 GPUs with batchsize 1 per GPU and gradient accumulation step 32, thus resulting in a total batch size of 256.

\subsection{Experimental Results}\label{sec:results}

This subsection shows the results of DEL on four PoT code generation settings (GSM8K, MATH, SVAMP, and SimulEq) and three CoT reasoning settings (AQuA, SAT, and MMLU).
For other evaluations, we use Qwen2.5-1.5B by default.
In \appnref{appendix:visualization}, we provide more exempler visual comparisons for various numerical learning methods.

\begin{table*}[t]
\scriptsize
    \caption{Numerical answer evaluation on GSM8K with various loss functions. We report the numerical distance and the task accuracy for the predicted number answers, along with the per digit metric for the unit place, tens place, hundreds place, thousands place, and ten thousands place, denoted by $1$, $10^1$, $10^2$, $10^3$, $10^4$, respectively.}
    \vspace{-4pt}
  \subfloat[Numerical distance evaluation.\label{tab:numerical-distance}]{
  \tablestyle{3.5pt}{1.05}
        \begin{tabular}{lcccccccc}
            \toprule
            \multirow{2}{*}{\textbf{Method}} & \multicolumn{5}{c}{\textbf{MAE per digit}} & \multirow{2}{*}{\textbf{Global MAE}} \\
            \cmidrule(lr){2-6}
            & 1 & $10^1$ & $10^2$ & $10^3$ & $10^4$ & \\
            \midrule
            MLE & 1.26 & 1.20 & 0.91 & 1.11 & 1.57 & 0.72 \\
            MixCE \cite{MixCE} & 1.25 & 1.18 & 0.91 & 1.08 & 1.72 & 0.79 \\
            EMO \cite{EMO} & 1.13 & 1.08 & 0.85 & 0.98 & 1.72 & 0.71 \\
            NTL \cite{NTL} & \textbf{1.16} & 1.06 & 0.84 & 1.05 & 1.53 & 0.78 \\
            DIST$^2$Loss \cite{DIST2loss} & 1.18 & 1.14 & 0.93 & 1.11 & 1.62 & 0.77 \\
            DEL (Ours) & 1.17 & \textbf{1.03} & \textbf{0.81} & \textbf{0.91} & \textbf{1.17} & \textbf{0.63} \\
            \bottomrule
        \end{tabular}}\hfill
  \subfloat[Numerical accuracy evaluation.\label{tab:numerical-accuracy}]{
  \tablestyle{3.5pt}{1.05}
        \begin{tabular}{lcccccccc}
            \toprule
            \multirow{2}{*}{\textbf{Method}} & \multicolumn{5}{c}{\textbf{Accuracy per digit}} & \multirow{2}{*}{\textbf{Task Acc.}} \\
            \cmidrule(lr){2-6}
            & 1 & $10^1$ & $10^2$ & $10^3$ & $10^4$ & \\
            \midrule
            MLE & 66.8 & 62.4 & 63.7 & 66.4 & 61.7 & 54.0 \\
            MixCE \cite{MixCE} & 66.6 & 62.5 & 65.6 & 65.6 & 57.4 & 52.8 \\
            EMO \cite{EMO} & 69.3 & 64.1 & 66.3 & 66.4 & 59.6 & 54.5 \\
            NTL \cite{NTL} & 68.9 & 65.1 & 64.6 & 64.9 & 61.7 & 55.1 \\
            DIST$^2$Loss \cite{DIST2loss} & 67.9 & 63.5 & 64.6 & 61.1 & 53.2 & 54.2 \\
            DEL (Ours) & \textbf{70.0} & \textbf{67.0} & \textbf{67.2} & 64.9 & \textbf{61.7} & \textbf{57.0} \\
            \bottomrule
        \end{tabular}}
    \vspace{-10pt}
    \label{tab_3}
\end{table*}

\begin{table*}[t]
    \renewcommand\arraystretch{1}
    \centering
    \footnotesize
    \setlength{\tabcolsep}{4pt}
    \caption{Evaluation of diverged components in numerical learning.}
    \vspace{-4pt}
    \begin{threeparttable}
        \begin{tabular}{ccccccccc}
            \toprule
            Digit probability $\hat{\mathbf{p}}(\cdot|x_{<t})$ & DEL in \Eqref{eq:del} & Distance term in NTL & Floating-point number optim. & mACC \\
            \midrule
            & & & & 52.9 \\
            \checkmark & & \checkmark & & 53.8 \\
            \checkmark & & \checkmark & \checkmark & 54.4 \\
            & \checkmark & & & 55.1 \\
            & \checkmark & \checkmark & & 54.3 \\
            & \checkmark & & \checkmark & 55.4 \\

            \bottomrule
        \end{tabular}
    \end{threeparttable}
    \vspace{-10pt}
    \label{tab:diverged-components}
\end{table*}

\mypara{Benchmark Evaluation}.
We report the quantitative results in \tabref{tab:seven-benchmarks}.
One can see that DEL produces the best results in most benchmarks and achieves the best average performance across all model architectures, demonstrating consistent improvements over existing methods.
It can be seen that the general methods MixCE and EMO can improve mathematical reasoning performance by enhancing logical deduction and language understanding capability.
While EMO achieves better overall results than MixCE, it fails in training on CodeLlama and Mistral.
This is presumably because that EMO relies on the token embeddings to calculate the similarity between tokens, which may introduce numerical instability in training.
The numerical learning methods NTL and DIST$^2$Loss better boost the accuracy since they are designed to improve digit prediction performance.
Our DEL better caters to the low-entropy nature of digits, resulting in the best numerical learning performance.

\mypara{Numerical answers evaluation}.
Aside from global task accuracy, we also compare the numerical answers for the methods.
We consider two evaluation metric.
In \tabref{tab:numerical-distance}, we calculate the global numerical distance for the predicted numbers, as well as for every digital place.
To alleviate the effect of large numbers, we follow \cite{NTL} to use the log Mean Absolute Error (MAE) to measure the relative distance between the predicted digits and the ground-truths.
We omit reporting very large digital place because the model may predict excessively long digital sequences, and these numbers are rare.
In \tabref{tab:numerical-accuracy}, we report the numerical accuracy for the predicted numbers, as well as for every digital place.
The results shows that our DEL can substantially reduce the numerical distance even without the explicit injection of numerical distance term, demonstrating a more precise number generation.

\mypara{Evaluation of diverse components in numerical learning}.
We conduct experiments to check out the effect of each component in numerical learning.
As shown in \tabref{tab:diverged-components}, we start from MLE, a clean baseline of 52.9 mACC on seven benchmarks.
The second row is NTL \cite{NTL}, which adopts the digital probability $\hat{\mathbf{p}}(\cdot|x_{<t})$ as the criterion term, along with the numerical distance, achieving a mACC of 53.8.
It can be further improved to 54.4 if we intergrate our floating-point number generation into NTL, as shown by the third row.
Then, in 4-th row, our DEL without any other components produces 55.1 mACC, demonstrating the superiority of the new criterion term.
If we add the distance term of NTL, it degrades to 54.3, as shown in the results of 5-th row.
As we have analyzed in \secref{sec:DEL}, the numerical distance terms injects a strong geometric prior into the LLMs, which has a deviation from the semantic information of digits.
Thus we deprecate the distance term in our formulation.
Finally, combined with the floating-point optimization, the performance can be further improved to 55.4 mACC (see the last row).
This gives us our final DEL formula (\Eqref{eq:del-w})

\mypara{Number Uncertainty}.
To investigate the uncertainty of the predicted numbers, we further statistically analyze the token entropy of the predicted sequence.
Based on the full space probability $\mathbf{p}(\cdot|x_{<t})$, we decompose the entropy calculation into two disjoint parts, the digital part and the non-digital part.
As shown in \figref{fig:digital-entropy}, the orginal baseline MLE produces a low digital entropy, approximately 0.1.
NTL \cite{NTL} significantly reduces digital entropy because its optimization goal is to forcibly push the probability of non-target digits to $0$.
Another extreme example is DIST$^2$Loss \cite{DIST2loss}, which significantly increases the digital entropy, making it even exceed $1$.
This is because DIST$^2$Loss takes the distance-driven soft targets as optimization objectives.
It should be noted that the average entropy of non-digit tokens is around $0.3\sim 0.5$ for all these methods, as depicted by the orange line in \figref{fig:digital-entropy}.
This indicates that DIST$^2$Loss largely rises the uncertainty of number generation, whereas NTL tends to significantly sharpen the probability distribution of digits.
The criterion term of DEL makes it a mild method for optimizing number uncertainty.

\begin{figure*}[!tb]
	\centering
    \small
		\begin{overpic}[width=1\linewidth]{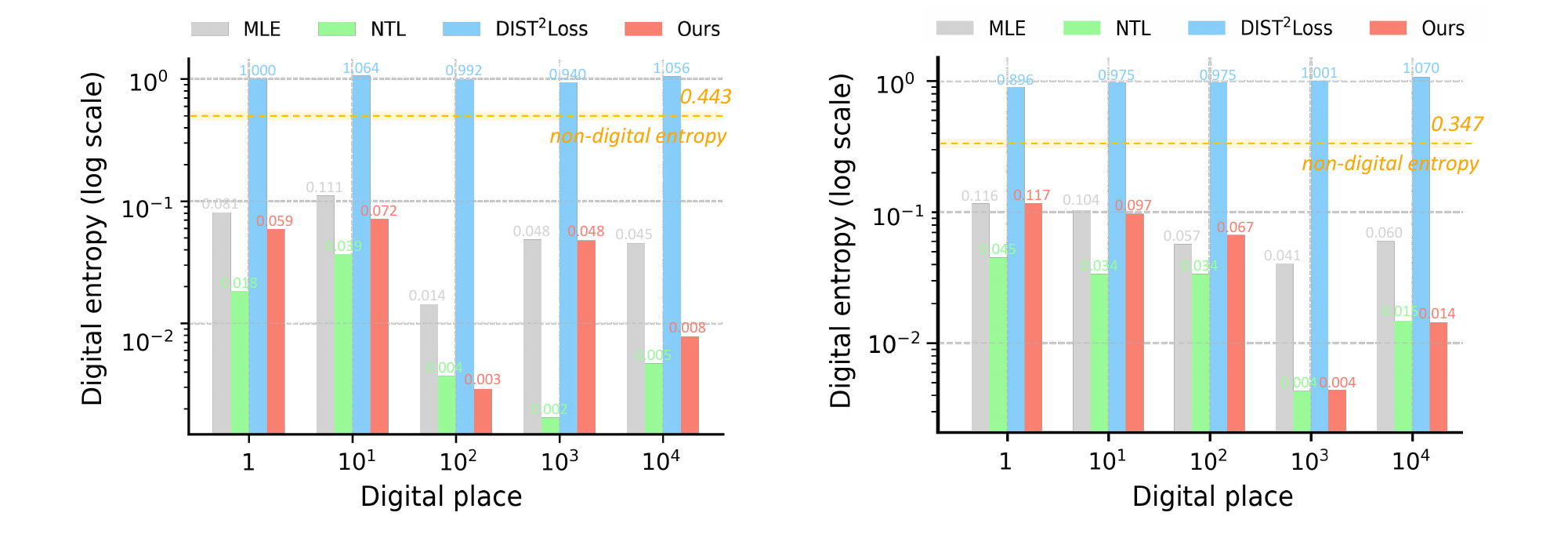}
        \put(15,1){(a) Digital entropy on SVAMP}
        \put(62,1){(b) Digital entropy on GSM8K}
  \end{overpic}
  \vspace{-16pt}
	\caption{Comparison of digital entropy between various numerical learning methods.}
  \label{fig:digital-entropy}
  \vspace{-12pt}
\end{figure*}

\section{Conclusion}

In this paper, we analyzed the existing numerical learning approaches in large language model training, and found that state-of-the-art methods generally follow the criterion-distance formulation.
From the perspective of entropy optimization, we presented digit entropy loss (DEL) to achieve better performance on number generation.
We further generalized the numerical learning to floating-point number optimization, enabling the language models with holistic numerical awareness.
Experiments on seven mathematical reasoning benchmarks demonstrated the effectiveness of the proposed DEL over prior state-of-the-art numerical learning methods.

\mypara{Limitations}.
While demonstrating consistent improvements over previous methods, DEL has some limitations.
Firstly, it is still optimized in a token-level manner, without joint modeling of the global numerical semantics of the number sequence.
Secondly, DEL currently extends numerical learning to floating-point number optimization, but has not yet expanded many mathematical operations.
In the future, one may consider modeling mathematical operations in LLMs, such as addition, multiplication, division, exponentiation, etc., enabling LLMs with more numerical reasoning capabilities.

%==========================================================================
% Acknowledgements
%==========================================================================
\section*{\textcolor{polyured}{$\blacksquare$}\,Acknowledgements}
This work was supported by the Visual Computing Lab at The Hong Kong
Polytechnic University and VCIP at College of Computer Science, Nankai University.
We thank our colleagues for valuable discussions.

%==========================================================================
% Bibliography
%==========================================================================
{\small
\bibliography{ref}
}

\appendix

\clearpage
\setcounter{table}{0}
\setcounter{equation}{0}
\setcounter{figure}{0}
\renewcommand{\thetable}{\thesection.\arabic{table}}
\renewcommand{\theequation}{\thesection.\arabic{equation}}
\renewcommand{\thefigure}{\thesection.\arabic{figure}}
\begin{center}
    {\LARGE\bfseries Appendix}
\end{center}

In appendix, we provide the following materials:
\begin{enumerate}[label=\Alph*.]
    \item More analysis for various numerical learning methods (referring to \secref{sec:3} in the main paper).

    \item Hyper-parameters study for the loss weight $\alpha$ and the designs in floating-point number optimization (referring to \secref{sec:setup} in the main paper).

    \item Exempler visual comparisons for the numerical learning methods, along with our failure case analysis (referring to \secref{sec:results} in the main paper).
\end{enumerate}

\section{More Analysis}\label{appendix:analysis}

\subsection{Notations}

We here give more analysis for the numerical learning.
We first lay out some important notations.
For simplicity, we consider one multi-digit floating-point number, which is given by \Eqref{eq:float-point} and we write here again:
\begin{equation}\label{eq:number}
    \mathbf{g}=\mathbf{g}_0^L\mathbf{g}_1^{L-1}\cdots \mathbf{g}_L^0\textbf{.}\mathbf{g}_{L+2}^{-1}\mathbf{g}_{L+3}^{-2}\cdots \mathbf{g}_{L+M+1}^{-M},
\end{equation}
where the superscripts and the subscripts of $\mathbf{g}$ represent the digital place and the generation step, respectively.
The total sequence $U$ with length $T$ can be represented as
\begin{equation}
    U = U_{pre}, \mathbf{x}= U_{pre},x_0,x_1,\cdots,x_{L},\cdots, x_{L+M+1},
\end{equation}
where $U_{pre}$ denotes the input sequence preceding the number $\mathbf{x}$.
We denote $x_{<t}$ as all preceding input sequences of $x_t$.
The numerical sequence $\mathbf{g}$ can be decoded from the digital token sequences,
\begin{equation}
    \mathbf{g}=\text{Decode}(\mathbf{x}).
\end{equation}

\tabref{tab:notations} presents the notations we used in the analysis.

\subsection{Analysis to various numerical learning methods}

We here give the analysis to various numerical learning methods.
We first simplify the symbols by ignoring the token index $t$ since all losses NTL \cite{NTL}, DIST$^2$Loss and DEL are token-wised and independent of $t$.
We lay out the penalty terms for all the three losses.
\begin{equation}
\begin{aligned}
    &\mathcal{R}_{NTL} = \hat{\mathbf{p}}||V_d-\mathbf{g}||_1 = \sum\limits_k\hat{\mathbf{p}}_kd_k, \quad d_k = ||k-\mathbf{g}||_1, \\
    &\mathcal{R}_{dist} = \mathcal{L}_{\text{KL}}(\mathbf{q}\parallel\hat{\mathbf{p}}) = -\sum\limits_k\mathbf{q}_k\log\hat{\mathbf{p}}_k - \mathcal{H}(\mathbf{q}),\\
    &\mathcal{R}_{DEL} = \hat{\mathbf{p}}\mathbf{BCE}(\dot{\mathbf{p}}, y) = \sum\limits_k\hat{\mathbf{p}}_k B_k= \sum\limits_k\hat{\mathbf{p}}_k \underbrace{\Big(-y_k\log\dot{\mathbf{p}}_k - (1-y_k)\log\big(1-\dot{\mathbf{p}}_k\big)\Big)}_{B_k}.
\end{aligned}
\end{equation}
The partial derivative of the penalty term w.r.t the output logits $\hat{z}_k$ are as follows:
\begin{equation}
\begin{aligned}
    & \frac{\partial\mathcal{R}_{NTL}}{\partial z_k} = \hat{\mathbf{p}}_k(d_k-\mathcal{R}_{NTL}), \\
    & \frac{\partial\mathcal{R}_{dist}}{\partial z_k} = \hat{\mathbf{p}}_k-\mathbf{q}_k, \\
    & \frac{\partial\mathcal{R}_{DEL}}{\partial z_k} = \hat{\mathbf{p}}_k(B_k-\mathcal{R}_{DEL}+\dot{\mathbf{p}}_k-y_k)
\end{aligned}
\end{equation}

\begin{proposition}\label{pro1}
The optimal solution of $\mathcal{R}_{DEL}$ is the one-hot label $y$.
\end{proposition}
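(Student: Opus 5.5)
The plan is to show that $\mathcal{R}_{DEL}$ is bounded below by $0$, that this bound is reached exactly when the prediction coincides with the one-hot label $y$, and that $y$ is a stationary point of the gradient computed above. The starting observation is that every summand of $\mathcal{R}_{DEL}=\sum_k\hat{\mathbf{p}}_kB_k$ is nonnegative. Indeed, $\hat{\mathbf{p}}_k\in[0,1]$, and $B_k$ is a binary cross-entropy with $\dot{\mathbf{p}}_k\in(0,1)$. Hence $\mathcal{R}_{DEL}\ge 0$, and any configuration with $\mathcal{R}_{DEL}=0$ is a global minimizer.

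\emph{Step 1: $y$ attains the lower bound.} Let $g$ denote the target digit, so $y_g=1$ and $y_k=0$ for $k\neq g$. Take the logits along $\hat z_g\to+\infty$ and $\hat z_k\to-\infty$ for $k\neq g$. Then the softmax gives $\hat{\mathbf{p}}\to y$, and the sigmoid gives $\dot{\mathbf{p}}\to y$ coordinatewise. Consequently $B_g=-\log\dot{\mathbf{p}}_g\to0$, and $B_k=-\log(1-\dot{\mathbf{p}}_k)\to0$ for $k\ne g$. Thus $\mathcal{R}_{DEL}\to0$, which is the infimum, attained at $\hat{\mathbf{p}}=\dot{\mathbf{p}}=y$.

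\emph{Step 2: characterize the zero set.} Suppose $\mathcal{R}_{DEL}=0$. Then for every $k$, either $\hat{\mathbf{p}}_k=0$ or $B_k=0$. Moreover, $B_k=0$ forces $\dot{\mathbf{p}}_k=y_k$.
\begin{itemize}
\item Since $\hat{\mathbf{p}}$ is a distribution, some index carries its mass.
\item For a non-target $k$ with $\hat{\mathbf{p}}_k>0$, we would need $\dot{\mathbf{p}}_k=0$, i.e.\ $\hat z_k=-\infty$. But $\hat{\mathbf{p}}_k>0$ requires $\hat z_k$ to be no smaller than the other logits.
\end{itemize}
I will argue that within the admissible set — where $\hat{\mathbf{p}}$ and $\dot{\mathbf{p}}$ are generated by the same finite (or consistently limiting) logit vector, with the binary prediction agreeing with the categorical one — this is incompatible unless the mass sits on $g$. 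Mass on $g$ then forces $\dot{\mathbf{p}}_g=1$, and the remaining $\hat{\mathbf{p}}_k=0$. So the consistent zero is $\hat{\mathbf{p}}=y$, which is the statement of Proposition~\ref{pro1}.

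\emph{Step 3: stationarity check.} Using the derivative $\partial\mathcal{R}_{DEL}/\partial z_k=\hat{\mathbf{p}}_k(B_k-\mathcal{R}_{DEL}+\dot{\mathbf{p}}_k-y_k)$ derived above, substitute $\hat{\mathbf{p}}=\dot{\mathbf{p}}=y$. Then $B_k=0$ and $\mathcal{R}_{DEL}=0$, so each component is $y_k\cdot 0=0$ or $0\cdot(\cdot)=0$. The gradient therefore vanishes at $y$. This confirms that $y$ is a fixed point of gradient descent, not merely an infimum.

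The main obstacle is Step 2. A degenerate limit exists in which a wrong digit $k$ dominates $\hat{\mathbf{p}}$ while all logits go to $-\infty$: there $\hat{\mathbf{p}}_g B_g$ can also vanish, because $\hat{\mathbf{p}}_g$ decays exponentially while $B_g$ grows only linearly. To handle this, I will make explicit the coupling assumption implicit in the proposition, namely that the binary and categorical predictions must agree on the argmax. Under that assumption the non-target branch is excluded, and $y$ is the unique optimal solution.
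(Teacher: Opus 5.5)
Your Step~1, together with the nonnegativity observation, is the paper's proof. The paper notes that $\mathbf{BCE}(\dot{\mathbf p},y)$ vanishes at $\dot{\mathbf p}=y$ and that $\hat{\mathbf p}\ge 0$, concludes that $\mathcal{R}_{DEL}$ has global minimum $0$ at $y$, and claims nothing more. Your limit formulation is more accurate than the paper's. For finite logits every $\hat{\mathbf p}_k$ and every $B_k$ is strictly positive, so $\mathcal{R}_{DEL}>0$. The value $0$ is therefore only approached as the logits diverge, and the same holds for the vanishing gradient in your Step~3. Say ``approached'', not ``attained'' or ``fixed point''.

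The genuine problem is Step~2, where you go beyond the paper and try to prove uniqueness. Uniqueness is false for $\mathcal{R}_{DEL}$, and your coupling assumption cannot rescue it. Both $\hat{\mathbf p}=\mathcal{S}(\hat z)$ and $\dot{\mathbf p}=\sigma(\hat z)$ are strictly order-preserving functions of the same logit vector, so their argmaxes coincide for every $\hat z$, and the assumption excludes nothing.

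Concretely, fix $k\neq g$ and set $\hat z_k=-N$ and $\hat z_j=-2N$ for all $j\neq k$. Then:
\begin{itemize}
\item $\hat{\mathbf p}_k\to 1$ and $\hat{\mathbf p}_j\approx e^{-N}$ for $j\neq k$;
\item $B_k=\log(1+e^{-N})$ and $B_g=\log(1+e^{2N})\approx 2N$;
\item hence $\mathcal{R}_{DEL}\approx e^{-N}+2Ne^{-N}\to 0$.
\end{itemize}
Yet both $\hat{\mathbf p}$ and $\dot{\mathbf p}$ have their argmax at the wrong digit $k$, and $\hat{\mathbf p}$ tends to the one-hot vector at $k$.

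Your intermediate claim that $\hat{\mathbf p}_k>0$ ``requires $\hat z_k$ to be no smaller than the other logits'' is also wrong. Softmax entries are positive for any finite logits, and $\hat{\mathbf p}_k\ge c$ only forces $\hat z_k\ge \max_j \hat z_j+\log c$. So the incompatibility you assert does not exist. This degenerate branch is exactly the ``$\hat{\mathbf p}\to 0$'' alternative that the paper itself lists among the optimal solutions in Table~\ref{tab:comparison-loss}.

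You have two ways to fix this. The first is to drop Step~2: the proposition, as the paper proves it, only asserts that $y$ is \emph{an} optimal solution.

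The second is to get uniqueness from the cross-entropy term of $\mathcal{L}_{DEL}$:
\begin{itemize}
\item Since the digit softmax normalizes over a subset of $V$, $\hat{\mathbf p}_g\ge \mathbf p(g|x_{<t})$.
\item So driving $\mathcal{H}(\mathbf p,\mathbf g)\to 0$ forces $\hat{\mathbf p}_g\to 1$, which excludes your degenerate branch.
\item Then $\hat{\mathbf p}_gB_g\to 0$ forces $\dot{\mathbf p}_g\to 1$.
\end{itemize}
Even then, $\dot{\mathbf p}_k$ for $k\neq g$ need not tend to $0$. Take $\hat z_g=N$ and $\hat z_k=0$: then $\hat{\mathbf p}_kB_k\approx e^{-N}\log 2\to 0$ while $\dot{\mathbf p}_k=1/2$. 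So the uniqueness you can actually obtain is for $\hat{\mathbf p}$, not for $\dot{\mathbf p}$.
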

\begin{proof}
The one-hot label $y$ satisfies $y_\mathbf{g}=1$ and $y_k=0,\,\forall k\neq \mathbf{g}$.
Since $\mathbf{BCE}(\dot{\mathbf{p}}, y)$ has the global minima $0$ when $\dot{\mathbf{p}}=y$, and by nonnegativity of $\hat{\mathbf{p}}$, one can get $\mathcal{R}_{DEL}$ has the global minima $0$, which is also one of the global minima of the Shannon entropy $\mathcal{H}(\hat{\mathbf{p}})$.
\end{proof}

\mypara{Analysis to NTL.} We can derive from its partial dericative that NTL encourages to lift up the probabilities for the tokens near the target $\mathbf{g}$.
Let $s_k = d_k - \mathcal{R}_{NTL}$, we have the following inequalities,
\begin{equation}
\begin{aligned}
    s_k = d_k - \sum\limits_{i\neq \mathbf{g}} \hat{\mathbf{p}}_i d_i \leq d_k- \sum\limits_{i\neq \mathbf{g}} \hat{\mathbf{p}}_i = d_k -1 + \hat{\mathbf{p}}_\mathbf{g},\\
    s_k = d_k - \sum\limits_{i\neq \mathbf{g}} \hat{\mathbf{p}}_i d_i \geq d_k- \sum\limits_{i\neq \mathbf{g}} 9\hat{\mathbf{p}}_i = d_k -9 + 9\hat{\mathbf{p}}_\mathbf{g}.\\
\end{aligned}
\end{equation}
Consequently, we have
\begin{equation}
    d_k -9 + 9\hat{\mathbf{p}}_g\leq s_k\leq d_k -1 + \hat{\mathbf{p}}_g
\end{equation}
This indicates that the logits whose tokens is furthest from $\mathbf{g}$ with distance $d_k=9$ will be definitely reduced since $s_k\geq 9-9+9\hat{\mathbf{p}}_g>0$ and thus the partial derivative $\partial\mathcal{R}_{NTL}/\partial z_k>0$.
Likewise, the logits $z_\mathbf{g}$ will be increased since $s_k\leq 0-1+\hat{\mathbf{p}}_\mathbf{g} = \hat{\mathbf{p}}_\mathbf{g} - 1 < 0$ and thus the partial derivative $\partial\mathcal{R}_{NTL}/\partial z_k<0$.
For the logits with medium distance, e.g., $d_k\in[1,8]$, will be optimized according to $\hat{\mathbf{p}}_g$.
It encourages to increase the logits of the token near the target $\mathbf{g}$.
When $\hat{\mathbf{p}}_\mathbf{g} >8/9$, indicating the probability of the target token is optimized well, it deduces $s_k>0$ and all the logits of the non-target tokens will be further reduced.
To sum up, NTL gradually shifts the probability mass from tokens far from $\mathbf{g}$ to the tokens near $\mathbf{g}$, and finally $\mathbf{g}$ alone.

\mypara{Analysis to DIST$^2$Loss}.
By the property of KL divergence, the optimal solution of $\hat{\mathbf{p}}$ equals to the soft target $\mathbf{q}$.
In this case, one can obtain the digital conditional entropy, 
\begin{equation}
    \mathcal{H}(\hat{\mathbf{p}})=\mathcal{H}(\mathbf{q}) = -\sum\limits_k \mathbf{q}_k\log\mathbf{q}_k
\end{equation}
Since $\mathcal{H}(\mathbf{q})$ is a constant, it can be directly computed.
There is totally 10 possible ground-truth digits for $\mathbf{g}$.
Due to the symmetry of $\mathbf{q}$, it can be reduced to 5 cases.
We list the computation results of $\mathcal{H}(\mathbf{q})$ in \tabref{tab:Hq}.
One can see that $0.6456\leq\mathcal{H}(\mathbf{q})\leq 1.0715$, which is much larger than the token entropy as we have shown in \figref{fig:digital-entropy}.
While the cross-entropy term $\mathcal{H}(\mathbf{p},\mathbf{g})$ attempts to optimize $\mathbf{p}$ to an one-hot distribution, the penalty term of DIST$^2$Loss still enlarges the digital entropy because the target itself has a large entropy.
This is why we observe the phenomenon of increasing digital entropy from \figref{fig:digital-entropy} that the digital entropy of DIST$^2$Loss is approximately 1.0.

\begin{table*}[t]
    \renewcommand\arraystretch{1}
    \centering
    \footnotesize
    \setlength{\tabcolsep}{4pt}
    \caption{Notations.}
    \begin{threeparttable}
        \begin{tabular}{ll}
            \toprule
            Notations &  Interpretation \\
            \midrule
            $\hat{z}\in\mathbb{R}^{L+M+2,10}$ & All the predicted logits for $\mathbf{x}$ on digital tokens. \\
            $\hat{z}_t\in\mathbb{R}^{10}$ & The $t$-th logit vector of $x_t$, $t\in \{0,1,\cdots,L+M+1\}$. \\
            $\mathbf{g}_t$ & The numerical value of the token $x_t$. \\
            $y_t$ & The one-hot label of $\mathbf{g}_t$. \\
            $V_d=\{0,1,\cdots,9\}$ & The digital vocabulary. \\
            $\hat{\mathbf{p}}(\cdot|x_{<t})=\mathcal{S}(\hat{z}_t)$ & The digital conditional probability of $x_t$ over $V_d$, calculated by the softmax function $\mathcal{S}$. \\
            $\dot{\mathbf{p}}(\cdot|x_{<t})=\sigma(\hat{z}_t)$ & The binary digital conditional probability of $x_t$, calculated by the sigmoid function $\sigma$. \\
            $\mathbf{q}=\mathcal{S}(-\rho(V_d, \mathbf{g}_t), \tau)$ & The soft distribution of numerical distance $\rho(V_d, \mathbf{g}_t)$. \\
            $\mathcal{S}(\cdot, \tau)$ & The Softmax function with temperature $\tau$. By default, $\tau=1$.  \\
            $\sigma(\cdot)$ & The Sigmoid function. \\
            \bottomrule
        \end{tabular}
    \end{threeparttable}
    \vspace{0pt}
    \label{tab:notations}
\end{table*}

\begin{table*}[t]
    \renewcommand\arraystretch{1}
    \centering
    \small
    \setlength{\tabcolsep}{6pt}
    \caption{The entropy of the distance distribution $\mathbf{q}$.}
    \begin{threeparttable}
        \begin{tabular}{lccccc}
            \toprule
            The target token $\mathbf{g}$ & 0, 9 & 1, 8 & 2, 7 & 3, 6 & 4, 5 \\
            $\mathcal{H}(\mathbf{q})$ & 0.6456 & 1.0169 & 1.0708 & 1.0708 & 1.0715 \\
            \bottomrule
        \end{tabular}
    \end{threeparttable}
    \vspace{-10pt}
    \label{tab:Hq}
\end{table*}

\mypara{Analysis to Digit Entropy Optimization}.
Digit entropy optimization is not new.
For example, it has been independently discussed in \cite{bian2025feature} for object detection and in \cite{agarwal2025the} for LLM reasoning, where entropy minimization is observed to be useful.
Others take the opposite approach: maximum entropy principle \cite{li2025preserving,wangbeyond} to encourage the models to evolve more diversity of thought chains.
To chase for deterministic number generation, we consider entropy minimization, as presented in \Eqref{eq:entropy}.
In the 5-th row of \tabref{tab:focal-vs-DEL}, one can see that the overall performance of digit entropy optimization already surpasses previous numerical learning methods, e.g., NTL and DIST$^2$Loss.
Due to its unsupervised nature, it is free from the issue of numerical distance.
However, this entropy loss has many trivial solutions.
Any one-hot distribution can reach the global minima.
Our DEL leverages the digit conditional probability and the binary cross-entropy, enabling a supervised entropy optimization manner.
The last row of \tabref{tab:focal-vs-DEL} show that our DEL achieves the best mACC on the 7 benchmarks, demonstrating the effecttiveness of DEL.

\mypara{Analysis to Focal Loss}.
Adopting binary cross-entropy loss in multi-class learning is not new either.
It has been recently demonstrated in \cite{li2025bce} that BCE can maximize the intra-class compactness and inter-class distinctiveness among sample features, which makes it has potential to achieve better classification performance than standard cross-entropy loss.
Early research focal loss \cite{lin2017focal} also adopted BCE as the core objectives in object detection.
It was originally proposed to alleviate the gradient dominated problem caused by a large number of negative samples.
This loss function was designed to balance the learning of positive and negative anchor boxes.
Under our defined symbol system, integrating focal loss into numerical learning can be written as
\begin{equation}\label{eq:focalloss}
    \mathcal{L}_{FL} = \alpha \sum\limits_k A_k^\gamma B_k = \alpha\sum\limits_k\Big(y_k(1-\dot{\mathbf{p}}) + (1-y_k)\dot{\mathbf{p}}\Big)^\gamma \Big(-y_k\log\dot{\mathbf{p}}_k - (1-y_k)\log\big(1-\dot{\mathbf{p}}_k\big)\Big)
\end{equation}
It can be seen that focal loss models each token as a binary classification task on $|V|$ classes.
Without the softmax-based probabilities, it is difficult to modeling the inner relationship of digits and is inconsistent with the inference phase.
Moreover, the design principles of focal loss is opposite to our DEL.
Focal loss significantly reduces the loss for easily classified samples by setting the weight $A_k$, i.e., $A_k\rightarrow 0$ when $y_k=1$ and $\dot{\mathbf{p}}_k\rightarrow 1$.
In contrast, DEL focuses more on the high-confidence tokens.
To support focal loss in LLM training, we make the necessary modifications to enable it to perform in digit subspace.
In \tabref{tab:focal-vs-DEL}, we compare our DEL with the digital focal loss.
The experimental results demonstrate that DEL can achieve better accuracy on 6 of 7 benchmarks.

\begin{table*}[!tb]
    \renewcommand\arraystretch{1.1}
    \centering
    \small
    \setlength{\tabcolsep}{3.5pt}
    \caption{Quantitative comparison between digital focal loss and our DEL. The base model is Qwen2.5-1.5B.}
    \begin{threeparttable}
        \begin{tabular}{lcccccccccc}
            \toprule
            \textbf{Method} & \textbf{mACC} && \textbf{GSM8K} & \textbf{MATH} & \textbf{SVAMP} & \textbf{SimulEq} & \textbf{AQuA} & \textbf{SAT} & \textbf{MMLU} \\
            \midrule
            MLE & 52.8 && 54.0 & 42.3 & 71.6 & 44.2 & 45.3 & 56.4 & 55.5 \\
            Digital Focal Loss & 53.0 && 55.9 & 43.0 & 72.8 & 44.7 & 45.3 & 54.5	& 54.8 \\
            NTL-WAS \cite{NTL} & 53.8 && 55.1 & \textbf{43.5} & 73.3 & 46.1 & 43.1 & 59.1 & 56.3 \\
            DIST$^2$Loss \cite{DIST2loss} & 53.1 && 54.2 & 41.9 & 73.2 & 44.6 & 45.7 & 55.9 & 56.5  \\
            Digit Entropy Optim. (\Eqref{eq:entropy}) & 54.2 && 54.6 & 42.4 & 72.7 & 47.7 & 47.2 & 60.5 & 54.2 \\
            DEL (Ours) & \textbf{55.4} && \textbf{57.0} & 42.6 & \textbf{74.0} & \textbf{48.1} & \textbf{48.8} & \textbf{60.9} & \textbf{56.1} \\
            \bottomrule
        \end{tabular}
    \end{threeparttable}
    \vspace{-4pt}
    \label{tab:focal-vs-DEL}
\end{table*}

\section{Hyper-parameters study}\label{appendix:ablation}

\mypara{Loss weight $\alpha$ in DEL}.
We investigate the impact of loss function weights on model performance.
\tabref{tab:alpha} reports the quantitative results.
Due to the varying vocabulary sizes of different LLMs, such as 151,936 for Qwen 2.5, 102,400 for DeepSeek-Math-Instruct, 32,016 for CodeLlama, and 32,000 for Mistral, the setting of $\alpha$ will be adjusted accordingly.
We set $\alpha=0.01$ for Qwen2.5 and DeepSeek-Math-Instruct and $\alpha=0.1$ for CodeLlama and Mistral in other experiments by default.

\mypara{$k$ and $\beta$ in floating-point number optimization}.
In \Eqref{eq:digital-place-weight}, we assign a linear penalty for integers and an exponential decay for decimals.
According to the results in \tabref{tab:k}, we set $k=2$ by default in all the other experiments.
As for the decimals, we conduct various options.
\tabref{tab:beta} shows us interesting results that the performance slightly drops if we set no penalty or a constant penalty for the decimals.
What if we treat the decimal as a new integer (as done by DIST$^2$Loss, see \figref{fig:comparison-digital-place}), the performance will degrade significantly.
This indicates that the discontinuous penalty between integers and decimals leads to suboptimal results.
In contrast, our floating-point number optimization can better match the continuity of numerical values, further boosting the performance gains.
We set $\beta=1.02$ by default.

\section{Visualization}\label{appendix:visualization}

We provide several visual comparison for the numerical learning methods.
We highlight the reasoning errors in \textcolor{red}{\textbf{red}} bold.
We use Qwen2.5-1.5B to inference.
In \figref{fig:visualization1}, we show that accurate number prediction is crucial to mathematical reasoning as a small error in number prediction would lead to a complete wrong answer to the mathematical problem.
More examples can be seen in \figref{fig:visualization2}, \figref{fig:visualization3}, \figref{fig:visualization4}.

We also present some failure cases that the current numerical learning is hard to handle.
The first one is irrelevant context distraction \cite{shi2023large}, as shown in \figref{fig:failure-cases}.
This error is irrelevant to number prediction accuracy, which is simply caused by the distracted attention to the irrelevant contexts.
The second error is calculation error.
The reasoning is correct, however, the calculation results are wrong.
As we have concluded in \secref{sec:3}, numerical reasoning requires language models to possess a range of capabilities, such as logical reasoning, symbolic comprehension, unit conversion, problem decomposition, contextual semantic association.
We believe that in the future, explicitly injecting more capabilities into the optimization process of language models will help improve numerical reasoning.

\begin{table*}[!tb]
\footnotesize
    \caption{\textbf{Ablations}. We show ablation experiments for DEL.}
  \subfloat[Loss weight $\alpha$ in DEL.\label{tab:alpha}]{
  \tablestyle{3pt}{1.0}
        \begin{tabular}{lcccccccccccc}
            \toprule
            $\alpha$ & & \textbf{Accuracy} & \textbf{MAE} \\
            \midrule
            %0.05 && 48.2 & 40.5 \\
            0.02 && 56.0 & 0.69  \\
            0.01 &&  \textbf{57.2} & \textbf{0.65}  \\
            0.005 &&  56.8 & 0.66 \\
            0 && 54.0 & 0.71  \\
            \bottomrule
        \end{tabular}}\hfill
  \subfloat[Linear weighting $k$ in integers in floating-point number optimization.\label{tab:k}]{
  \tablestyle{6.5pt}{1.0}
        \begin{tabular}{lcccccccccccc}
            \toprule
            $k$ & & \textbf{Accuracy} & \textbf{MAE} \\
            \midrule
            0.5 && 56.5 & 0.68  \\
            1 && 56.8 & 0.68  \\
            2 && \textbf{57.2} & \textbf{0.65} \\
            5 &&  56.2 & 0.72  \\
            \bottomrule
        \end{tabular}}\hfill
  \subfloat[Exponential decay $\beta$ in decimals in floating-point number optimization.\label{tab:beta}]{
  \tablestyle{4.5pt}{1.0}
        \begin{tabular}{lcccccccccccc}
            \toprule
            $\beta$ & & \textbf{Accuracy} & \textbf{MAE} \\
            \midrule
            $\infty$ && 56.4 & 0.69  \\
            1 && 56.6 & 0.66  \\
            1.02 && \textbf{57.2} & \textbf{0.65}  \\
            as integers &&  55.2 & 0.74  \\
            \bottomrule
        \end{tabular}}\hfill
    \vspace{0pt}
\end{table*}

\begin{figure*}[!tb]
	\centering
    \small
		\begin{overpic}[width=1\linewidth]{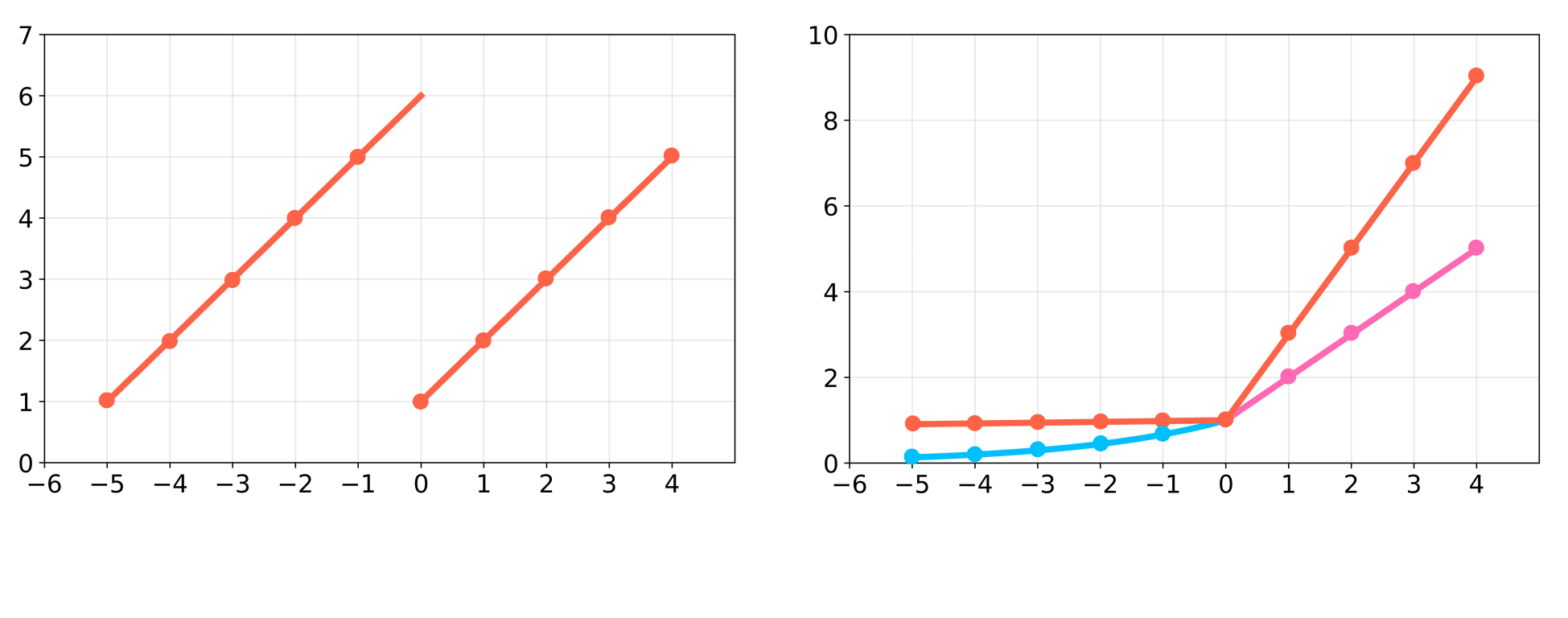}
        \put(24.5,5){$i$}
        \put(76,5){$i$}
        \put(13.5,2){(a) $u(t)$ in DIST$^2$Loss}
        \put(66,2){(b) $u(t)$ in our method}
  \end{overpic}
  \vspace{-16pt}
	\caption{Illustration of floating-point optimization. Our DEL adopts linear penalty for integers ($i\geq0$) and exponential decay in decimals ($i<0$), while previous work DIST$^2$Loss has a discontinuous penalty. Different colors represent different parameter choices in $u(t)$.}
  \label{fig:comparison-digital-place}
  \vspace{-4pt}
\end{figure*}

\begin{figure*}
  \linespread{1.05}\selectfont
  \begin{tcolorbox}
  [colback=white!95!gray, colframe=gray!50!black, rounded corners, title={Question: $$3x+x+x+x-3-2=7+x+x$$In the equation above, what is the value of $x$ ?\\
  Answer Choices: (A) $-\frac{5}{7}$ \quad (B) 1 \quad (C) $\frac{12}{7}$  \quad (D) 3\\
  Correct: D}]
  \textbf{MLE}:

    We can combine like terms on the left side of the equation to get
    $$3x+x+x+x-3-2=7+x+x$$
    $$\textcolor{red}{\textbf{4}}x-5=7+2x$$
    We can subtract $2x$ from both sides to get
    $$2x-5=7$$
    Then we can add 5 to both sides to get
    $$2x=12$$
    Finally, we can divide both sides by 2 to get
    $$x=6$$
    The answer is B\\

  \textbf{NTL}:

    3x+x+x+x-3-2=7+x+x
    
    3x+x+x+x-5=7+2x
    
    3x+x+x+x-2x=7+5
    
    \textcolor{red}{\textbf{3}}x=12
    
    x=12/3
    
    x=4
    
    The answer is B\\

  \textbf{DIST$^2$Loss}:
    $$3 x+x+x+x-3-2=7+x+x$$
    $$\textcolor{red}{\textbf{4}}x-5=7+2x$$
    $$2x=12$$
    $$x=6$$
    The answer is D\\
    
  \textbf{DEL (Ours)}:
  
    3x+x+x+x-3-2=7+x+x
    
    3x+x+x+x-5=7+2x
    
    3x+x+x+x-2x=7+5
    
    3x+x=12
    
    4x=12
    
    x=3
    
    The answer is D
\end{tcolorbox}
\vspace{-10pt}
  \caption{Visualization of prediction results of LLM. The reasoning errors are highlighted in \textcolor{red}{\textbf{red}} bold.}
  \label{fig:visualization1}
\end{figure*}

\begin{figure*}
  \linespread{1.2}\selectfont
  \begin{tcolorbox}
  [colback=white!95!gray, colframe=gray!50!black, rounded corners, title={Question: John plans to sell all his toys and use the money to buy video games. He has 13 lego sets and he sells them for \$15 each. He ends up buying 8 video games for \$20 each and has \$5 left. How many lego sets does he still have?\\
  Correct: 2}]
  \textbf{MLE}:
  
He earned 13*15=195 dollars from selling his lego sets

He spent 8*20=160 dollars on video games

He has 195-160-5=30 dollars left

\textcolor{red}{\textbf{He originally had 30+160=190 dollars}}

He has 190/15=12.666667 lego sets left

The answer is 12\\

  \textbf{NTL}:

He earned 13*15=195 dollars

He spent 8*20=160 dollars on video games

He has 195-160-5=30 dollars left

He has 195-30=165 dollars left to buy lego sets

\textcolor{red}{\textbf{He has 165/15=11 lego sets left}}

The answer is 11\\

  \textbf{DIST$^2$Loss}:

He earned 13*15=195 dollars from selling his lego sets

He spent 8*20=160 dollars on video games

He has 195-160-5=30 dollars left

He had 195-30=165 \textcolor{red}{\textbf{lego sets}}

He sold 195-160=35 lego sets

He has 165-35=130 lego sets left

The answer is 130\\

  \textbf{DEL (Ours)}:
  
He sold 13 lego sets for 13*15=195 dollars

He spent 8*20=160 dollars on video games

He had 195-160-5=30 dollars left

He had 195-30=165 dollars from selling legos

He had 13-165/15=13-11=2 legos left

The answer is 2\\

\end{tcolorbox}
\vspace{-10pt}
  \caption{Visualization of prediction results of LLM. The reasoning errors are highlighted in \textcolor{red}{\textbf{red}} bold.}
  \label{fig:visualization2}
\end{figure*}

\begin{figure*}
  \linespread{1.2}\selectfont
  \begin{tcolorbox}
  [colback=white!95!gray, colframe=gray!50!black, rounded corners, title={Question: Charlotte went into the kitchen supply store knowing she wanted a set of pot and pans for \$120.00, a set of mixing bowls for \$20.00 and 5 separate utensils at \$5.00 each.  At check out, the clerk told her everything was 20\% off.  How much was her order?\\
  Correct: 132}]
  
  \textbf{MLE}:
  
She had a \$120.00 set of pots and pans and a \$20.00 set of mixing bowls so that's 120+20 = \$140.00

She had 5 separate utensils at \$5.00 each so that's 5*5 = \$25.00

Her order was \$\textcolor{red}{\textbf{140.00}} for pots and pans, \$20.00 for mixing bowls and \$25.00 for utensils for a total of 140+20+25 = \$185.00

Her order was \$185.00 and it was 20\% off so that's .20*185 = \$37.00

Her order was \$185.00 and it was 20\% off so she paid \$37.00 less so her order was 185-37 = \$148.00

The answer is 148\\

  \textbf{NTL}:

She had a \$20.00 mixing bowl and 5 separate utensils at \$5.00 each so that's 20+5*5 = \$\textcolor{red}{\textbf{35.00}}

Her order was \$120.00 for the pots and pans, \$20.00 for the mixing bowls and \$35.00 for the utensils for a total of 120+20+35 = \$175.00

Her order was \$175.00 and it was 20\% off so that's 175*.20 = \$35.00

Her order was \$175.00 and it was 20\% off so that's 175-35 = \$140.00

The answer is 140\\

  \textbf{DIST$^2$Loss}:

She had a \$120.00 set of pots and pans and a \$20.00 set of mixing bowls so that's 120+20 = \$140.00

She had 5 separate utensils at \$5.00 each so that's 5*5 = \$25.00

Her order was \$\textcolor{red}{\textbf{140.00}} for pots and pans, \$20.00 for mixing bowls and \$25.00 for utensils for a total of 140+20+25 = \$185.00

Her order was \$185.00 and it was 20\% off so that's 185*.20 = \$37.00

Her order was \$185.00 and it was 20\% off so that's \$185-\$37 = \$148.00

The answer is 148\\

  \textbf{DEL (Ours)}:
  
She had a set of pots and pans for \$120.00 and mixing bowls for \$20.00 so her order was \$120.00 + \$20.00 = \$140.00

She had 5 separate utensils at \$5.00 each so that was 5 * \$5.00 = \$25.00

Her order was \$140.00 + \$25.00 = \$165.00

Her order was \$165.00 and it was 20\% off so that's 20/100 * \$165.00 = \$33.00 off

Her order was \$165.00 - \$33.00 = \$132.00

The answer is 132\\

\end{tcolorbox}
\vspace{-10pt}
  \caption{Visualization of prediction results of LLM. The reasoning errors are highlighted in \textcolor{red}{\textbf{red}} bold.}
  \label{fig:visualization3}
\end{figure*}

\begin{figure*}
  \linespread{1.2}\selectfont
  \begin{tcolorbox}
  [colback=white!95!gray, colframe=gray!50!black, rounded corners, title={Question: Valerie earns \$5000 per month, 1/2 of what her brother earns. If their mother earns twice their combined salary, what's the total amount of money they all have together?\\
  Correct: 45000}]
  
  \textbf{MLE}:
  
Valerie earns \textcolor{red}{\textbf{\$5000/2 = \$2500}} per month.

Valerie's brother earns \$2500/2 = \$1250 per month.

Their combined salary is \$2500 + \$1250 = \$3750 per month.

Their mother earns 2 * \$3750 = \$7500 per month.

The total amount of money they all have together is \$5000 + \$3750 + \$7500 = \$16250.

The answer is 16250\\

  \textbf{NTL}:

Valerie earns 1/2 of what her brother earns, meaning her brother earns 5000*2=10000 per month.

Together, Valerie and her brother earn 10000+5000=15000 per month.

Their mother earns twice their combined salary, meaning she earns 2*15000=30000 per month.

The total amount of money they all have together is 30000+15000=45000 per month.

The answer is 45000\\

  \textbf{DIST$^2$Loss}:

Valerie earns \textcolor{red}{\textbf{\$5000/2 = \$2500}} per month.

Her brother earns \$2500 * 2 = \$5000 per month.

Their combined salary is \$2500 + \$5000 = \$7500 per month.

Their mother earns 2 * \$7500 = \$15000 per month.

The total amount of money they all have together is \$15000 + \$7500 = \$22500.

The answer is 22500\\

  \textbf{DEL (Ours)}:
  
Valerie's brother earns 2 * 5000 = 10000 dollars per month.

Valerie's mother earns 2 * (5000 + 10000) = 30000 dollars per month.

The total amount of money they all have together is 5000 + 10000 + 30000 = 45000 dollars.

The answer is 45000\\

\end{tcolorbox}
\vspace{-10pt}
  \caption{Visualization of prediction results of LLM. The reasoning errors are highlighted in \textcolor{red}{\textbf{red}} bold.}
  \label{fig:visualization4}
\end{figure*}

\begin{figure*}
  \linespread{1.2}\selectfont
  \begin{tcolorbox}
  [colback=white!95!gray, colframe=gray!50!black, rounded corners, title={Irrelevant context distraction}]
  
  \textbf{Question}:During summer break 644997 kids from Lawrence county stayed home and the other 893835 kids went to camp. An additional 78 kids from outside the county attended the camp. How many kids are in Lawrence county?

  \textbf{Correct}: 1538832

  \textbf{Answer}:
First add the number of kids from Lawrence county who stayed home to the number of kids who went to camp: 644997 + 893835 = 1538832

\textcolor{red}{\textbf{Then add the number of kids from outside the county who attended the camp to find the total number of kids in Lawrence county: 1538832 + 78 = 1539010}}

The answer is 1539010
\end{tcolorbox}

  \begin{tcolorbox}
  [colback=white!95!gray, colframe=gray!50!black, rounded corners, title={Calculation error}]

  \textbf{Question}:
Question: Michael rented a truck for one day. There was a rental fee of 20.99 dollars , and there also was a charge of 25 cents for each mile. Michael paid 95.74 dollars when he returned the truck. For how many miles did he drive the truck?\\
  \textbf{Correct}: 299

  \textbf{Answer}:
  Let x be the number of miles Michael drove the truck.
  
  The total cost is 20.99 + 0.25*x = 95.74
  
  Subtracting 20.99 from both sides, we get 0.25*x = 74.75
  
  Dividing both sides by 0.25, we get x = \textcolor{red}{\textbf{303}}
  
  The answer is 303
\end{tcolorbox}
\vspace{-10pt}
  \caption{Failure cases. The reasoning errors are highlighted in \textcolor{red}{\textbf{red}} bold.}
  \label{fig:failure-cases}
\end{figure*}

\end{document}